\documentclass[sigconf,nonacm]{acmart}

\usepackage{booktabs}       
\usepackage{amsfonts}       
\usepackage{nicefrac}       
\usepackage{microtype}      
\usepackage[table]{xcolor}
\usepackage{comment}
\usepackage{amsmath}
\usepackage{amsfonts}
\usepackage{mathtools}
\usepackage{subcaption}
\usepackage{bm}
\usepackage{bbm}
\usepackage{makecell}
\usepackage{pdflscape}
\usepackage{caption} 
\usepackage{longtable}
\usepackage{graphicx}
\usepackage{dsfont}

\newtheorem{theorem}{Theorem}[section]
\newtheorem{proposition}{Proposition}[section]

\newtheorem{lemma}[theorem]{Lemma}

\usepackage{algorithm}
\usepackage{algpseudocode}

\newcommand{\One}{\mathbbm{1}}

\newcommand{\expectation}[1]{\mathbb{E} \left[ #1 \right]} 
  
\newcommand{\expectationD}[1]{\mathbb{E}_D \left[ #1 \right]}  
\newcommand{\expectationDtrain}[1]{\mathbb{E}_{D_{train}} \left[ #1 \right]} 
\newcommand{\expectationDvalid}[1]{\mathbb{E}_{D_{valid}} \left[ #1 \right]}

\DeclareMathOperator*{\argmin}{arg\,min}

\newif\ifshowcomments
\showcommentstrue 

\ifshowcomments
\newcommand{\changed}[1]{\textcolor{black}{#1}}
\newcommand{\dhe}[1]{\textcolor{blue}{DH: #1}}
\newcommand{\dhc}[1]{\textcolor{blue}{[DH comment: #1]}}
\newcommand{\lpe}[1]{\textcolor{violet}{[LP: #1]}}
\newcommand{\lpc}[1]{\textcolor{violet}{[LP comment: #1]}}
\newcommand{\noe}[1]{\textcolor{teal}{[NO: #1]}}
\newcommand{\noc}[1]{\textcolor{teal}{[NO comment: #1]}}
\newcommand{\nte}[1]{\textcolor{orange}{[NT: #1]}}
\newcommand{\ntc}[1]{\textcolor{orange}{[NT comment: #1]}}
\newcommand{\fle}[1]{\textcolor{red}{[FL: #1]}}
\newcommand{\flc}[1]{\textcolor{red}{[FL comment: #1]}}
\newcommand{\mve}[1]{\textcolor{green}{[MV: #1]}}
\newcommand{\mvc}[1]{\textcolor{green}{[MV comment: #1]}}
\else
  \newcommand{\changed}[1]{}
  \newcommand{\dhe}[1]{}
  \newcommand{\dhc}[1]{}
  \newcommand{\lpe}[1]{}
  \newcommand{\lpc}[1]{}
  \newcommand{\nte}[1]{}
  \newcommand{\ntc}[1]{}
  \newcommand{\noc}[1]{}
  \newcommand{\noe}[1]{}
  \newcommand{\fle}[1]{}
  \newcommand{\flc}[1]{}
  \newcommand{\mve}[1]{}
  \newcommand{\mvc}[1]{}
\fi
\newcommand{\ourmethod}[0]{\textsc{MCGrad}}

\hypersetup{
    colorlinks=true,
    linkcolor=blue,
    citecolor=blue,
    urlcolor=blue
}

\settopmatter{printacmref=false}
\renewcommand\footnotetextcopyrightpermission[1]{}
\pagestyle{plain}

\begin{document}

\title{\ourmethod{}: Multicalibration at Web Scale}

\author{Niek Tax}
\affiliation{%
  \institution{Meta Platforms, Inc.}
  \city{London}
  \country{United Kingdom}
}
\email{niek@meta.com}

\author{Lorenzo Perini}
\affiliation{%
  \institution{Meta Platforms, Inc.}
  \city{London}
  \country{United Kingdom}
}
\email{lorenzoperini@meta.com}

\author{Fridolin Linder}
\affiliation{%
  \institution{Meta Platforms, Inc.}
  \city{Munich}
  \country{Germany}
}
\email{flinder@meta.com}

\author{Daniel Haimovich}
\affiliation{%
  \institution{Meta Platforms, Inc.}
  \city{London}
  \country{United Kingdom}
}
\email{danielha@meta.com}

\author{Dima Karamshuk}
\affiliation{%
  \institution{Meta Platforms, Inc.}
  \city{London}
  \country{United Kingdom}
}
\email{karamshuk@meta.com}

\author{Nastaran Okati}
\affiliation{%
  \institution{Max Planck Institute for Software Systems}
  \city{Kaiserslautern}
  \country{Germany}
}
\email{nastaran@mpi-sws.org}

\author{Milan Vojnovic}
\affiliation{%
  \institution{Meta Platforms, Inc.}
  \city{London}
  \country{United Kingdom}
}
\affiliation{%
  \institution{The London School of Economics and Political Science}
  \city{London}
  \country{United Kingdom}
}
\email{m.vojnovic@lse.ac.uk}

\author{Pavlos Athanasios Apostolopoulos}
\affiliation{%
  \institution{Meta Platforms, Inc.}
  \city{Menlo Park}
  \state{California}
  \country{USA}
}
\email{pavlosapost@meta.com}

\renewcommand{\shortauthors}{Niek Tax et al.}

\begin{abstract}
We propose \ourmethod{}, a novel and scalable multicalibration algorithm. Multicalibration - calibration in subgroups of the data - is an important property for the performance of machine learning-based systems. Existing multicalibration methods have thus far received limited traction in industry. We argue that this is because existing methods (1) require such subgroups to be manually specified, which ML practitioners often struggle with, (2) are not scalable, or (3) may harm other notions of model performance such as log loss and Area Under the Precision-Recall Curve (PRAUC). \ourmethod{} does not require explicit specification of protected groups, is scalable, and often improves other ML evaluation metrics instead of harming them. \ourmethod{} has been in production at \textbf{Meta}, and is now part of hundreds of production models. We present results from these deployments as well as results on public datasets. \changed{We provide an open source implementation of \ourmethod{} at \url{https://github.com/facebookincubator/MCGrad}}.
\end{abstract}

\keywords{Multicalibration, Calibration, Uncertainty Quantification, Fairness}

\maketitle

\section{Introduction}
\label{sec:introduction}



A machine learning model is said to be \emph{calibrated} when its predictions match the true outcome frequencies~\cite{silva2023classifier,kull2017beta,bella2010calibration}. The importance of calibration to ML-based systems has been widely recognized~\cite{Kendall2017uncertainties, Guo2017oncalibration, Kull2019beyond}, including in web applications such as content ranking~\cite{Kweon2022obtaining, Penha2021}, recommender systems~\cite{Gulsoy2025, daSilva2025, Steck2018}, digital advertising~\cite{chen2022calibrating, Chaudhuri2017, Fan2023} and content moderation~\cite{Liu2025, Vidgen2020, Kivlichan2021}. Calibration is necessary for such systems to make optimal decisions without under- or overestimating risk or opportunity.

Multicalibration is a more powerful property that extends the concept of calibration to ensure that predictors are simultaneously calibrated on various (potentially overlapping) groups~\cite{pmlr-v80-hebert-johnson18a,pfisterer2021,noarov2023statistical,jung2021moment}. Often, multicalibration is applied as a post-processing step to fix a base classifier's predictions. While it originated in the study of algorithmic fairness, its significance extends far beyond it. A growing body of work shows that multicalibration can increase model performance and robustness in many circumstances, ranging from out-of-distribution prediction to confidence scoring of LLMs ~\cite{Gopalan2022LowDegreeM, Yang2024, wu2024bridging, kim2022universal, detommaso2024multicalibration, dwork2022beyond}. 

Despite its potential, multicalibration has received limited traction in industry. We argue that this is due to three main reasons. \textbf{First}, existing multicalibration methods require \textit{protected groups} to be \emph{manually defined}~\cite{jin2025}: not only do users need to specify the covariates that require protection (e.g., \texttt{user age} and \texttt{user country}), but also they must define concrete protected groups through binary membership indicators, such as ``\texttt{is an adult in the US?}''.
This poses a significant challenge to real-world application of multicalibration because practitioners (a) may not have established the precise set of protected groups, (b) may be subject to changes to the definition of protected groups over time thus requiring ongoing effort and change management (e.g., due to changes related to legal context, policy frameworks, ethical considerations, or otherwise), or (c) may only seek an overall performance improvement without prioritizing specific groups for protection. The difficulties defining protected groups become even more pronounced when the set of features that we want to protect is large. \textbf{Second}, existing methods lack the ability to scale to large datasets and/or many protected groups, which makes them hard to deploy in production. For instance, \citet{pmlr-v80-hebert-johnson18a}'s algorithm scales at least linearly in time and memory with the number of groups, being potentially inefficient when a large set of groups is specified. To be deployed at web scale, multicalibration methods must be highly optimized so as to not introduce significant computational requirements at training or inference time. 
\textbf{Third}, existing multicalibration methods lack guardrails to be safely deployed, and risk harming model performance (e.g., due to overfitting), which may prevent practitioners from deploying them altogether.

In this paper, we introduce \ourmethod{} (\underline{M}ulti\underline{C}alibration \underline{Grad}ient Boosting) a novel multicalibration algorithm deployed in production that is lightweight and safe to use.
\ourmethod{} requires only the specification of a set of protected features, rather than predefined protected groups. It then identifies miscalibrated regions within this feature space, and calibrates a base classifier's predictions over all groups that can be defined based on these features.
On a high level, \ourmethod{} uses a Gradient Boosted Decision Tree (GBDT) algorithm recursively, such that the result converges to a multicalibrated predictor (Section~\ref{sec:achieving_multicalibration}). By using a highly optimized GBDT implementation, like LightGBM~\cite{lightgbm}, \ourmethod{} inherits scalability and regularization (Section~\ref{sec:scalability}). Finally, it employs an early stopping procedure to avoid overfitting, ensuring that model performance is not harmed (Section~\ref{sec:safe_deployment}).

Empirically, on benchmark datasets, we show that \ourmethod{} improves the base predictor's outputs in terms of multicalibration, calibration, and even predictive performance (Section~\ref{sec:benchmark}). We complement this analysis by describing the impact of \ourmethod{} on real-world data: \ourmethod{} has been used in production at \textbf{Meta} on hundreds of machine learning models, and in total, generates over a million multicalibrated real-time predictions per second (Section~\ref{sec:experiments}). 
\changed{\ourmethod{} successfully addresses all three previous challenges and, to the best of our knowledge, this is the largest-scale adoption of multicalibration in production. Our deployment results show significant impact: on $24$ of $27$ models tested via A/B testing on our Looper platform~\cite{markov2022looper}, \ourmethod{} significantly outperformed Platt scaling, leading to promotion of the \ourmethod{}-calibrated variants to production. Additionally, across $120$+ models on our internal ML platform, \ourmethod{} improved log loss for $88.7\%$ of models, PRAUC for $76.7\%$, and Expected Calibration Error for $86.0\%$.} With these real-world results, we add to the growing body of evidence that shows that multicalibration can significantly contribute to model performance. We also provide valuable practical learnings from applying multicalibration in industry (Section~\ref{sec:learnings}), and link multicalibration to related areas (Section~\ref{sec:related}).

\section{Background on (Multi)Calibration}
\label{sec:background_calibration}

Let $\mathcal{X} = \mathbb{R}^d$ be a $d$-dimensional input space and $\mathcal{Y} = \{0, 1\}$ the binary target space. Let $D = \{(x_i,y_i)\}_{i=1}^n \sim p(X,Y)$ be the dataset with $n$ i.i.d. samples, where the random variables $X$, $Y$ represent, respectively, a $d$-dimensional feature vector and the target. A probabilistic predictor $f$ is a map $f \colon \mathcal{X} \to [0,1]$, which assigns an instance $x \sim p(X)$ to an estimate of its class conditional probability $p(Y=1\mid X=x)$. For any probabilistic predictor $f$, we denote with $F$ its logit (referred to as the predictor), i.e., $f(x) = \mu(F(x))$, where $\mu(F) = 1/(1+e^{-F})$ is the sigmoid function. We assume a base probabilistic predictor $f_0$ is given; this predictor is the one targeted for calibration via a post-processing procedure.

Let $\mathcal{L} \colon \mathbb{R} \times \mathcal{Y} \to \mathbb{R}$ be the log loss (negative log-likelihood), i.e., for any $(x,y)\in \mathcal{X} \times \mathcal{Y}$ and any predictor $F$,
\begin{equation*}
    \mathcal{L}(F(x),y) = -[y\log (\mu(F(x))) + (1-y)\log (1-\mu(F(x)))].
\end{equation*}

We use the symbol $\mathbb{E}$ for the expectation with respect to an arbitrary distribution that is clear from the context, and $\mathbb{E}_D$ for the sample mean over $D$.

\paragraph{\textbf{Calibration}}
A predictor $f$ is perfectly calibrated if and only if $\mathbb{P}(Y=1 \mid f(X)=p)=p$, where $p$ is the true underlying probability. Intuitively, for all input pairs $(x,y)\in D$, if $f$ predicts $0.8$, we expect that $80\%$ of them have $1$ as label. Recently, \citet{tygert2023calibration} introduced the Estimated Cumulative Calibration Error (ECCE), a novel parameter-free metric for measuring calibration that computes the maximum difference between the means of labels and predictions over any score interval: 
\begin{equation}\label{eq:ecce_definition_pscale}
\textsc{ECCE}(f) = \frac{1}{n} \max_{1\leq i\leq j\leq n} \big|\sum_{k=i}^j (y_{(k)}-f(x_{(k)}))\big|,
\end{equation}
where $(x_{(1)}, y_{(i)}), \ldots, (x_{(n)}, y_{(n)})$ are the points $\{(x_i, y_i)\}_{i\in D}$ ordered such that $f(x_{(1)})\leq \cdots \leq f(x_{(n)})$. 

This metric enjoys useful statistical properties. In particular, it can be standardized by dividing it by the following scaling statistic:\footnote{Notice that $\sigma(f)$ is the standard deviation of the sample mean of prediction labels with prediction probabilities according to $f$. For a calibrated predictor $f$, the expected value of $\nicefrac{\text{ECCE}(f)}{\sigma(f)}$ scales as $c$, for large data size $n$, with $c = 2\sqrt{2/\pi}\approx 1.6$ \cite{arrieta2022metrics}.}
\begin{equation*}
\sigma(f) = \sqrt{\frac{1}{n^2} \sum_{k=1}^n f(x_{k})(1-f(x_{k}))}.
\end{equation*}

 This gives rise to two ways of interpreting the metric. The absolute scale $\text{ECCE}(f)$ is designed to measure the magnitude of miscalibration. When the data size increases, $n \rightarrow \infty$, $\text{ECCE}(f)$ converges to a true magnitude of miscalibration ($0$ if $f$ is calibrated and a positive value otherwise). The sigma scale $\nicefrac{\text{ECCE}(f)}{\sigma(f)}$ however converges to a distribution when $f$ is calibrated, and  diverges to infinity when it is miscalibrated. The two scales are used to answer different questions. $\text{ECCE}$ is used to answer the question "is the miscalibration large?". $\nicefrac{\text{ECCE}(f)}{\sigma(f)}$ is used to answer the question "is there statistical evidence of miscalibration?". For example, we might have that $\text{ECCE}(f_1) = 0.05 = 20 \sigma(f_1)$ and $\text{ECCE}(f_2) = 0.05 = 2 \sigma(f_2)$. This would tell us that while the magnitude of miscalibration on the data appears to be similar for $f_1,f_2$, the result is statistically significant for $f_1$ but not for $f_2$, which could simply be due to random sampling. See \cite{arrieta2022metrics, tygert2023calibration} for further discussion.
 
\paragraph{\textbf{Multicalibration}}
Unlike calibration, multicalibration considers the intersection of the \textit{score interval} and the \textit{protected groups}. Let $\mathcal{G}\subset\{g:[0,1]\rightarrow \{0,1\}\}$ be the set of \emph{interval membership functions}, i.e. $g(v) = \One_I(v)$ for $v\in [0,1]$, and any interval $I\subseteq[0,1]$. Similarly, let $\mathcal{H} \subseteq \{h: \mathcal{X} \rightarrow \{0,1\}\}$ be a set of \emph{group membership functions} over $\mathcal{X}$, such that a point $x\in \mathcal{X}$ belongs to a group $h\in \mathcal{H}$ if and only if $h(x) = 1$. The groups can be overlapping, i.e.,  some point $x\in \mathcal{X}$ may belong to multiple groups. 

Multicalibration is defined in various ways across the literature~\cite{pmlr-v80-hebert-johnson18a,shabat2020sample,blasiok_et_al:LIPIcs.ITCS.2024.17,Gopalan2022LowDegreeM,happymap23,haghtalab2023}. Here, we employ an operational definition of multicalibration:
\begin{definition}[Multicalibration]~\label{def:multicalibration}
    A probabilistic predictor $f$ is $\alpha$-multicalibrated ($\alpha$-MC) with respect to $\mathcal{H}$ if, for the MC-deviation given as $\Delta_{h,g}(f) \coloneqq \big| \expectation{h(X) g(f(X)) (Y-f(X))} \big|$ and a given scale parameter $\tau_h(f)$:
\begin{equation*}
    \Delta_{h,g}(f) \le \alpha\tau_h(f) \quad \forall h \in \mathcal{H}, g \in \mathcal{G}.
\end{equation*}
We say that $f$ is \emph{multicalibrated} if $\alpha = 0$. 
\end{definition}

A common choice is to use a uniform bound on the MC-deviation, which is accommodated by choosing the scale parameter $\tau_h(f)$ to be a constant equal to $1$. In Appendix~\ref{sec:theory}, we show that Definition~\ref{def:multicalibration}, with $\tau_h(f)^2=\expectation{h(X)f(X)(1-f(X))}$, naturally matches with Multicalibration Error (MCE) by \citet{guy2025measuringmulticalibration}, which quantifies multicalibration error by computing the maximum ECCE over the protected groups with a suitable normalization: 
\begin{equation}\label{equ:mcedef}
\text{MCE}(f)=\max_{h\in \mathcal{H}} 
\frac{\text{ECCE}_h(f)}{\sigma_h(f)}
\end{equation}
where $\text{ECCE}_h(f), \sigma_h(f)$ are defined as $\text{ECCE}(f), \sigma(f)$ above but restricted to the group $h$. Roughly speaking, $\text{MCE}$ measures the strongest statistical evidence of miscalibration across protected groups. The unit of the MCE is the same as for $\text{ECCE}/\sigma$. It can be re-scaled to an absolute measure by multiplying with $\sigma(f)$. 

Using a multicalibration metric that is equivalent to the definition of $\alpha$-MC can help ensure that empirical work is aligned with theory. In Section~\ref{sec:benchmark} we use $\text{MCE}$ to measure multicalibration, thereby directly estimating the minimal $\alpha$ for which a model is $\alpha$-MC. To our knowledge, this is the first empirical multicalibration study that achieves this. See Appendix~\ref{sec:theory} for further discussion.


\section{\ourmethod{}: A Practical Algorithm for Multicalibration}
\label{sec:algorithm}

This paper tackles the following problem: 
\begin{description}
    \item[Given] a labeled dataset $D$, an initial probabilistic predictor $f_0$;
    \item[Return] a probabilistic predictor $f$ that is $\alpha$-MC w.r.t. $\mathcal{H}$.
\end{description}

Designing an algorithm for multicalibration without specifying a set of groups is challenging for four reasons. First, the absence of groups requires the multicalibration model to automatically determine the regions of the feature space where the base model is highly miscalibrated. Second, adjusting the model's output to improve its calibration in a specific region might harm the predictions over other regions, thus increasing the multicalibration error. Third, a post-processing algorithm has to be fast and lightweight to prevent large memory consumption or increased latency of model inference. Fourth, while existing algorithms claim to achieve multicalibration on the training set, they are subject to overfitting issues.

\begin{algorithm}[tb]
\begin{algorithmic}
\Require a probabilistic predictor $f_0$, a dataset $D$; 
\State $D_{train}$, $D_{valid}$ = train\_validation\_split($D$)
\State $F_0 = \mu^{-1}(f_0)$ \# inverse sigmoid transformation
\State $\varepsilon_{-1} = +\infty $, $\varepsilon_0 = \expectationDvalid{\mathcal{L}(F_0(X), Y)}$  \# initialize errors
\State $t=1$ \# initialize number of rounds for early stopping
\While{$\varepsilon_{t-1} - \varepsilon_t > 0$}
\State $h_{t} (x, f_{t-1}(x)) =$ fit GBDT on $\{((x_i, f_{t-1}(x_i)), y_i)\}_{i\in D_{train}}$
\State $\theta_t = \argmin_{\theta} \expectationDtrain{\mathcal{L}(\theta \cdot (F_{t-1}(X)+h_{t}(X, f_{t-1}(X))), Y)}$ 
\State $F_{t}(x) = \theta_t \cdot (F_{t-1}(x)+h_{t}(x, f_{t-1}(x)))$
\State $f_{t}(x) = \mu(F_{t}(x))$
\State $\varepsilon_{t+1} = \expectationDvalid{\mathcal{L}(F_{t}(X), Y)}$
\State $t \leftarrow t+1$
\EndWhile
\State $\#$ Found $t-1$ to be the best number of rounds
\For{$s$ in $[1, \dots, t-1]$}
\State $h_{s} (x, f_{s-1}) =$ fit GBDT on $\{((x_i, f_{s-1}(x_i)), y_i)\}_{i\in D}$
\State $\theta_s = \argmin_{\theta} \expectationD{\mathcal{L}(\theta \cdot (F_{s-1}(X)+h_{s}(X, f_{s-1}(X))), Y)}$ 
\State $F_{s}(x) = \theta_s \cdot (F_{s-1}(x)+h_{s}(x, f_{s-1}(x)))$
\State $f_s(x) = \mu(F_s(x))$
\EndFor
\State $f(x) = \mu(F_{t-1}(x))$
\Ensure $f$;
\end{algorithmic}
\caption{\ourmethod{}}
\label{alg:gbmct_new}
\end{algorithm}

Our proposed algorithm \ourmethod{}, presented in Algorithm~\ref{alg:gbmct_new}, is a lightweight algorithm that multicalibrates a base model without requiring the specification of the groups on which the model has to be calibrated. \ourmethod{} relies on a key observation: by including the base model's predictions as a feature, the loss function of the multicalibration algorithm captures the feature values that correspond to miscalibrated initial predictions. In addition, decreasing the loss function implies correcting the base model's predictions for some regions of the feature space. Because corrections in some regions may negatively impact others, \ourmethod{} runs multiple rounds, using the previous round's processed predictions as input. Since GBDTs are regularized using shrinkage (or a step size $<1$), a simple rescaling step reduces the number of trees and rounds required, while having a negligible impact on overfitting. Finally, \ourmethod{} employs early stopping to avoid overfitting. 

\subsection{\texorpdfstring{Achieving Multicalibration with $T$ Rounds}{Achieving Multicalibration with T Rounds}}\label{sec:achieving_multicalibration}

Our intuition builds on the following two key insights. First, GBDT returns a solution $f$ that approximately sets the gradient of the loss function $\mathcal{L}$ to zero:
\begin{equation*}
    \expectationD{h(X)(Y-f(X))} = 0 \qquad  \forall h \in \mathcal{H},
\end{equation*}
where $\mathcal{H}$ is the set of all regression trees over the feature space.

Second, if we augment the feature space $\mathcal{X}$ with a single additional feature $f_0(x)$, then the GBDT approximately achieves 
\begin{equation}\label{eq:gbdtachievemc}
    \expectationD{h(X,f_0(X))(Y-f_1(X))} = 0 \qquad  \forall h \in \mathcal{H}'
\end{equation}
where $\mathcal{H}' \subset \{h: \mathcal{X}\times [0,1]\rightarrow \mathbb{R}\}$ is the space of trees with an extra real-valued input, and $f_1(x) = \mu(F_0(x) + h_1(x,f_0(x)))$. This creates a bridge between (a) group and interval membership functions and (b) regression trees over the augmented feature space. Strictly speaking, for any $h\in \mathcal{H}, g\in \mathcal{G}$ there exists $h'\in\mathcal{H}'$ such that $h(x)g(f(x)) = h'(x, f(x))$ for $x\in\mathcal{X}$.

Perhaps surprisingly, Eq.~(\ref{eq:gbdtachievemc}) is equivalent to $0$-multicalibration according to Definition~\ref{def:multicalibration} under the condition $f_1\equiv f_0$. However, this condition may not always hold: \emph{correcting $f_0$ for some regions identified by $h(x,f_0(x))$ returns a predictor $f_1$ that might be miscalibrated on regions identified by its own predictions $h(x,f_1(x))$}. 
To resolve this, we create a loop over multiple rounds: in each round $t$, we train a GBDT model $f_t$ using the features $x$ and the previous round's predictions $f_{t-1}$, such that, after $T$ rounds, we have
\begin{equation*}
    \expectationD{h(X, f_{T-1}(X))(Y-f_T(X))} = 0 \qquad  \forall h \in \mathcal{H}^*
\end{equation*}
where $f_T(x) = \mu(F_{T-1}(x) + h_T(x,f_{T-1}(x)))$.

Intuitively, if for some probabilistic predictor $f^*$, $f_{T}\rightarrow f^*$ as $T\rightarrow \infty$, then for all $h \in \mathcal{H}^*$, $\expectationD{h(X, f_{T-1}(X))(Y-f_T(X))}\rightarrow \expectationD{h(X, f^*(X))(Y-f^*(X))} = 0$ as $T\rightarrow \infty$, which is our desired condition. This suggests that, for sufficiently large $T$, the probabilistic predictor $f_T$ should be approximately multicalibrated. See Appendix~\ref{app:conv} for further intuitions and theoretical results.

\subsection{Fast Training and Prediction}\label{sec:scalability}
To be deployed at web scale, there are strict requirements for an algorithm to be efficient both at training and inference time. Several design choices ensure that \ourmethod{} meets these requirements.

\paragraph{\textbf{Efficient gradient boosting}} First, the algorithm has been designed to rely on a relatively small number of calls to a GBDT, delegating the most compute intensive steps to one of many highly optimized GBDT implementations (e.g. ~\cite{lightgbm, xgboost}). This differentiates it from existing implementations such as~\cite{pmlr-v80-hebert-johnson18a}. In our implementation, we use LightGBM.

\paragraph{\textbf{Rescaling the logits.}}
GBDTs are regularized in multiple ways, including using a step size smaller than $1$ for scaling each additional tree. While this helps to avoid overfitting, it results in a predictor which can be improved by rescaling it by a factor slightly greater than $1$. As a result, the next round will attempt to apply this rescaling. Since linear rescaling is not easily expressed by decision trees, this may require many trees, ending up requiring unnecessarily many trees to achieve multicalibration.

We introduce a simple rescaling after every round:
\begin{equation*}
    \theta_t = \argmin_{\theta} \expectationD{\mathcal{L}(\theta \cdot (F_{t-1}(X) + h_t(X,f_{t-1}(X))), Y)}
\end{equation*}
where $h_t$ is obtained with LightGBM. We call the round $t$'s learned predictor (on the logits) as $F_t = \theta_t (F_{t-1}+h_t)$.

Note that this constant is typically very close to $1$ and has a very limited detrimental effect on regularization. Rescaling is not the same as using a step size of $1$: it affects the whole sum of trees (i.e., the whole predictor), while the step size only targets each tree sequentially (i.e., each weak learner).

\subsection{Preventing Overfitting}\label{sec:safe_deployment}
Modern GBDT algorithms support various methods for regularization, such as limiting the growth of the trees, the leaf splits, and the number of trees. \ourmethod{}'s recursive structure can give rise to additional overfitting beyond standard GBDTs, which we address in the following ways.

\paragraph{\textbf{Early stopping}} While multiple rounds are required for convergence, they also increase the capacity of the model. With $T$ rounds and $M$ trees in each round, the \ourmethod{} model is at least as expressive as a tree ensemble with $T
\cdot M$ trees. Since the model capacity of tree ensembles is $\Omega(\text{number of trees})$ (see~\cite{shalevshwartz2013understanding}), overfitting is more likely for a large number of rounds. 

We solve this problem with a standard early stopping procedure on the number of rounds $T$. Specifically, we split i.i.d. the dataset $D$ into training ($D_{train}$) and validation ($D_{valid}$) sets, and determine the number of rounds $T$ by taking the last round before the expected loss over the validation set increases. That is,
\begin{equation*}
    T = \min \{t \colon \expectationDvalid{\mathcal{L}(F_{t+1}(X), Y) - \mathcal{L}(F_{t}(X), Y)} > 0\}-1,
\end{equation*}
where the models $f_t$ are obtained using $D_{train}$ instead of $D$. Using early stopping in a real-world deployed system has a relevant consequence: \emph{\ourmethod{} does not harm the base model's prediction}, and, instead, would select $T=0$ as optimal number of rounds if the first step decreased the initial performance:
\begin{equation*}
    \expectationDvalid{\mathcal{L}(F_{1}(X), Y) - \mathcal{L}(F_0(X), Y)} > 0  \implies  T = 0
\end{equation*}
which means that $f_T = f_0$. 

\paragraph{\textbf{Regularizing through the min sum Hessian in leaf}} 

Augmenting the data with the previous round's model necessarily gives rise to regions of the augmented feature space that are particularly prone to overfitting. Consider a leaf of a partially constructed tree in GBDT. The leaf splitting algorithm can choose to split the leaf on values of the previous predictor $f_t(x)$. In this leaf, the left tail $f_t(x)<a$ for some small $a$ will contain only negative labels even if the true distribution may assign a positive probability to a positive label (and analogously for the right tail). In that case the new model can improve the likelihood simply by assigning a very low probability to this tail, as low as zero. Common regularization strategies, like setting the minimum number of samples per leaf, are insufficient to address this scenario. We solve this problem by using a more targeted form of regularization that limits the minimum total Hessian in a leaf, which is offered as one of multiple regularization techniques in LightGBM. The Hessian in a leaf $S$ equals $\sum_{(x,y) \in S} f_t(x) (1-f_t(x))$ and, as such, is a refined version of the simple sample size rule $|S|$. As the predicted probabilities become close to $0$ or $1$, the total Hessian becomes smaller and split in multiple leaves can no longer be considered. As shown in our ablation studies (Section~\ref{sec:benchmark}) this regularization reduces overfitting and improves performance.
\section{Benchmark Experimental Analysis}
\label{sec:benchmark}
We evaluate~\ourmethod{} on both public and production data. In this section, we present experiments on benchmark datasets and compare~\ourmethod{} against state-of-the-art baselines. In Section~\ref{sec:experiments} we present results of~\ourmethod{} in production. Here, we focus on \changed{five} research questions:\footnote{Code: \url{https://github.com/facebookresearch/mcgrad_multicalibration_at_web_scale}.}
\begin{itemize}
    \item[Q1.] How does \ourmethod{} compare to \emph{existing baselines} on unspecified groups?
    \item[Q2.] Existing methods require manual specification of protected groups. Does \ourmethod{} still protect those specified groups, even though they are not specified in~\ourmethod{}?
    \item[Q3.] Does \ourmethod{} benefit from running multiple rounds?
    \item[Q4.] What is the effect of \emph{rescaling the logits} and the \emph{regularization through min sum Hessian in leaf} on~\ourmethod{}?
    \changed{\item[Q5.] How does \ourmethod{}'s computational time compare to existing multicalibration methods?}
\end{itemize}

\begin{table*}[h]
    \centering
    \small
    \changed{
    \begin{tabular}{l|c|c|c|l}
    \toprule
    Dataset & \# Samples $n$ & \# Features $d$ & Class Distr. & Protected Attributes \\
    \midrule
    MEPS~\cite{sharma2021fair} & 11079 & 139 & 0.169 & Race, Gender, Age, Income, Insurance Status \\
    Credit~\cite{default_of_credit_card_clients_350} & 30000 & 118 & 0.221 & Gender, Age, Education, Marital Status \\
    Marketing~\cite{bank_marketing_222} & 45211 & 41 & 0.116 & Age, Marital Status, Education, Occupation \\
    HMDA~\cite{cooper2023variance} & 114185 & 89 & 0.752 & Race, Gender, Ethnicity, Age \\
    ACSIncomeCA~\cite{ding2021retiring, flood2021integrated} & 195665 & 10 & 0.410 & Race, Gender, Age, Education, Income, Employment \\
    ACSMobility~\cite{ding2021retiring, flood2021integrated} & 616207 & 23 & 0.735 & Race, Gender, Age, Education, Income, Employment \\
    ACSPublic~\cite{ding2021retiring, flood2021integrated} & 1123374 & 24 & 0.293 & Race, Gender, Age, Education, Income, Employment \\
    ACSTravel~\cite{ding2021retiring, flood2021integrated} & 1458542 & 19 & 0.437 & Race, Gender, Age, Education, Income, Employment \\
    ACSIncome~\cite{ding2021retiring, flood2021integrated} & 1655429 & 10 & 0.370 & Race, Gender, Age, Education, Income, Employment \\
    ACSEmploy~\cite{ding2021retiring, flood2021integrated} & 3207990 & 20 & 0.456 & Race, Gender, Age, Education, Income, Employment \\
    ACSHealth~\cite{ding2021retiring, flood2021integrated} & 3207990 & 31 & 0.150 & Race, Gender, Age, Education, Income, Employment \\
    \bottomrule
    \end{tabular}
    \caption{Number of samples, features, class distribution ($p(Y=1)$), and protected attributes for each benchmark dataset.}
    \label{tab:info_datasets}}
    \vspace{-0.5cm}
\end{table*}

\subsection{Experimental Setup}
\changed{\paragraph{\textbf{Data.}} 
Our experimental analysis uses 11 datasets (see Table~\ref{tab:info_datasets}). Six binarized prediction tasks are derived from the American Community Survey (ACS)~\cite{ding2021retiring, flood2021integrated} using the folktable package, plus a California-specific income task (ACSIncomeCA). The remaining five datasets are: UCI Bank Marketing~\cite{bank_marketing_222} (term deposit prediction), UCI Default of Credit Card Clients~\cite{default_of_credit_card_clients_350} (debt default), Home Mortgage Disclosure Act (HMDA)~\cite{cooper2023variance} (mortgage acceptance), and Medical Expenditure Panel Survey (MEPS)~\cite{sharma2021fair} (medical visits). For all datasets we use the same protected groups as in~\cite{hansen2024multicalibration}. Protected attributes are selected following standard fairness benchmarks.}
 
\paragraph{\textbf{Baselines.}} We compare \ourmethod{} against two multicalibration baselines: Discretization-Free MultiCalibration (DFMC)~\cite{jin2025} and \textsc{HKRR}~\citet{pmlr-v80-hebert-johnson18a}. DFMC shares some similarities with \ourmethod{}: It fits a single GBDT with a fixed maximum depth of 2 for each weak learner. Protected groups are explicitly specified by the user, rather than learned like in~\ourmethod{}. Protected groups are provided to the algorithm as binary features that encode group membership. \textsc{HKRR} is the boosting procedure proposed by~\citet{pmlr-v80-hebert-johnson18a}, for which we use ~\citet{hansen2024multicalibration}'s implementation\footnote{\url{https://github.com/dutchhansen/empirical-multicalibration}}. We also include Logistic Regression as \textsc{BasePred}, the base predictor $f_0$ shared by all methods, and \textsc{Isotonic}~\cite{zadrozny2002transforming} (Isotonic Regression), a widely adopted calibration algorithm.

\noindent\paragraph{\textbf{Metrics.}} We evaluate \ourmethod{} relative to baselines on three dimensions: the effect on 1) \emph{predictive performance}, 2) \emph{calibration}, and 3) \emph{multicalibration}.

1) \emph{Predictive performance}. We employ the Area Under Precision-Recall Curve (PRAUC)~\cite{davis2006relationship} and the log loss.

2) \emph{Calibration}. Various calibration metrics have been proposed in the literature. The \emph{Expected Calibration Error (ECE)}~\cite{naeini2015obtaining} is the most commonly used calibration measure in machine learning, which bins the model scores and computes the deviation from perfect calibration, i.e., the absolute difference between the model’s predicted accuracy and its empirical accuracy~\cite{naeini2015obtaining}. Binning-based calibration metrics, including ECE, vary significantly based on the choice of bins. This is a well-documented shortcoming~\cite{arrieta2022metrics,roelofs2022mitigating}. Alternative metrics that aim to circumvent the drawbacks of binning include \emph{smooth ECE (smECE)} which relies on kernel smoothing~\cite{blasiok2023smooth}, but merely replaces sensitivity to the arbitrary choice of binning with sensitivity to an arbitrary choice of kernel~\cite{arrieta2022metrics}. The \emph{Estimated Cumulative Calibration Error (ECCE)}~\cite{arrieta2022metrics} is a calibration metric that is rooted in cumulative statistics that does not require making an arbitrary choice in bin or kernel, which we covered in more detail in Section~\ref{sec:background_calibration}. \emph{Brier score} is a proper scoring rule that is commonly used to evaluate probabilistic forecasts that is essentially just the mean squared error of predicted probability. Brier score decomposes into calibration and the model's ability to separate the positive and negative class~\cite{blattenberger1985separating}. In this paper, we quantify calibration error using ECCE, with exceptions in Section~\ref{sec:experiments}, where for pragmatic reasons we resort to ECE and Brier score whenever ECCE was not measured in the production system.

3) \emph{Multicalibration}. \citet{blasiok_et_al:LIPIcs.ITCS.2024.17} proposed the \emph{maximum group-wise smECE} metric, which was used for empirical evaluations in \citet{hansen2024multicalibration}. \citet{guy2025measuringmulticalibration} proposed an extension of \emph{ECCE} to quantify multicalibration, called \emph{Multicalibration Error (MCE)}, which we defined in Eq.~(\ref{equ:mcedef}). \emph{Maximum group-wise smECE} and \emph{MCE} have in common that they both are defined as a maximum of some quantity over groups, but differ in the calibration quantity calculated in each group, where they respectively use smECE and \emph{ECCE}. In this paper, we use MCE to quantify multicalibration. \changed{
In the online supplement\footnote{\url{https://github.com/facebookresearch/mcgrad_multicalibration_at_web_scale}}}, we additionally report group-wise smECE, thereby providing results that are in-line with~\cite{hansen2024multicalibration}. We can measure any multicalibration metric with respect to either of two sets of groups. \textit{Prespecified Groups}: refers to a small (less than $15$ across all datasets) set of protected groups that are formed using various features of the datasets. For this, we use the groups specifications that were used in ~\citet{hansen2024multicalibration}. \textit{Unspecified Groups}: covers the case where the user does not specify which groups to protect against. In that case, the scores are expected to be multicalibrated with respect to all possible groups, and to calculate this, an extensive set of combinatorially generated groups are used.


\paragraph{\textbf{Hyperparameters.}}

\changed{\ourmethod{} uses \textsc{LightGBM} as GBDT implementation. To ensure seamless adoption by ML engineers, we set default hyperparameters so that \ourmethod{} works out-of-the-box, without requiring application-specific tuning in the majority of use-cases. We determined the default hyperparameters by performing a grid search over $35$ company-internal datasets and selecting the configuration that never degraded the base model's log loss or PRAUC, and minimized the average MCE across datasets. 
The resulting hyperparameters are: \textsc{learning\_rate}$ = 0.02873$, \textsc{max\_depth}$= 5$, \textsc{min\_child\_samples} $= 160$, \textsc{n\_estimators}$= 94$, \textsc{num\_leaves}$= 5$, \textsc{lambda\_l2}$= 0.00913$, \textsc{min\_gain\_to\_split}$= 0.15$. All other parameters are set as for \textsc{LightGBM} defaults.}

For \textsc{HKRR} we follow~\citet{hansen2024multicalibration} and pick the best hyperparameters from a set of four specifications using a held-out validation set. For DFMC we use the default hyperparameters of \textsc{LightGBM} since the paper does not suggest any specific hyperparameters.

\subsection{Experimental Results}

\begin{figure*}[ht]
    \centering
    \includegraphics[width=0.85\textwidth]{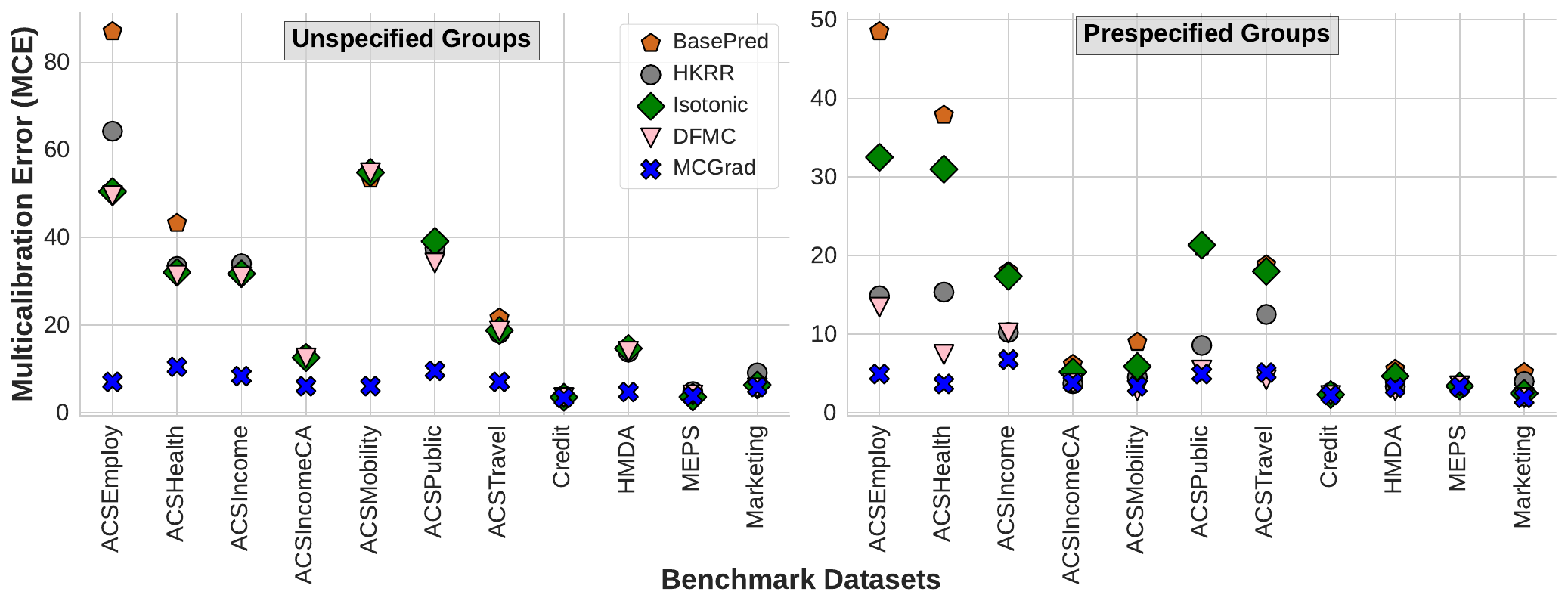}
    \vspace{-0.2cm}
    \caption{Multicalibration Error computed using \emph{unspecified} groups (left) or the manually \emph{prespecified} groups (right) for all compared methods on each benchmark dataset. Overall, \ourmethod{} achieves a better (lower) error for $10$ out of $11$ datasets when tested on unspecified groups, and for $5$ out of $11$ datasets when tested on prespecified groups.}
    \label{fig:mce_sigmascale}
\end{figure*}

\begin{figure*}[ht]
    \centering
    \vspace{-0.1cm}
    \includegraphics[width=0.91\textwidth]{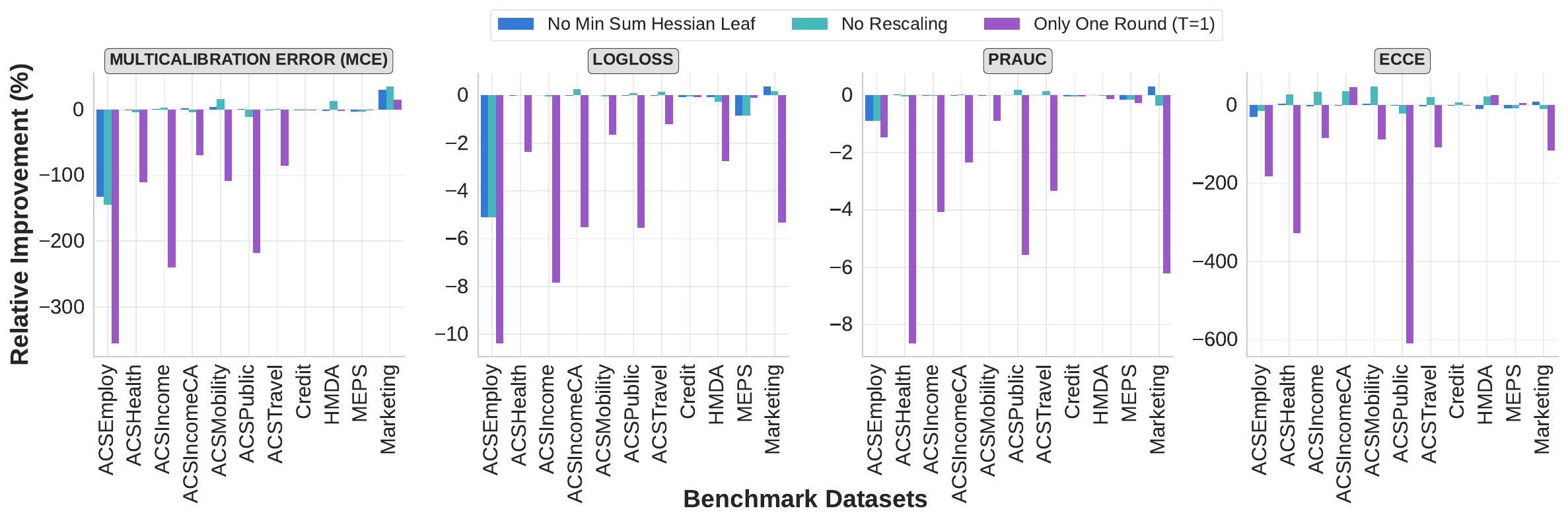}
    \vspace{-0.2cm}
    \caption{Improvement (in \%) of \ourmethod{}'s variants relative to the original version. While setting $T = 1$ yields a significant drop in performance, the effect of the rescaling factor and min sum Hessian in leaf are mild due to the datasets limited size.}
    \label{fig:ablation}
\end{figure*}

\setlength{\tabcolsep}{2.5pt} 
\begin{table}[tb]
\begin{tabular}{l|cc|cc|cc|cc}
\toprule
Baseline & \multicolumn{2}{c|}{MCE} & \multicolumn{2}{c|}{log loss} & \multicolumn{2}{c|}{PRAUC} & \multicolumn{2}{c}{ECCE} \\
 & \textsc{Avg} & \textsc{Rank} & \textsc{Avg} & \textsc{Rank} & \textsc{Avg} & \textsc{Rank} & \textsc{Avg} & \textsc{Rank} \\
\midrule
MCGrad & \textbf{6.60} & \textbf{1.18} & \textbf{0.375} & \textbf{1.00} & \textbf{0.714} & \textbf{1.00} & 1.97 & 2.27 \\
Isotonic & 24.00 & 2.91 & 0.410 & 3.27 & 0.662 & 3.91 & \textbf{1.53} & \textbf{1.54} \\
DFMC & 24.95 & 3.00 & 0.408 & 2.27 & 0.671 & 2.18 & 2.20 & 2.91 \\
HKRR & 25.74 & 3.73 & 0.414 & 4.00 & 0.637 & 5.00 & 3.29 & 3.54 \\
BasePred & 29.07 & 4.18 & 0.414 & 4.45 & 0.668 & 2.91 & 4.66 & 4.73 \\
\bottomrule
\end{tabular}
\caption{Average value (\textsc{Avg}) and rank (\textsc{Rank}) over the datasets, for various metrics. \ourmethod{} has the best average log loss and PRAUC, and second-best average ECCE.}
\label{tab:metrics_comparison_baselines}
\vspace{-0.5cm}
\end{table}

\begin{figure}[ht]
    \centering
    \includegraphics[width=0.485\textwidth]{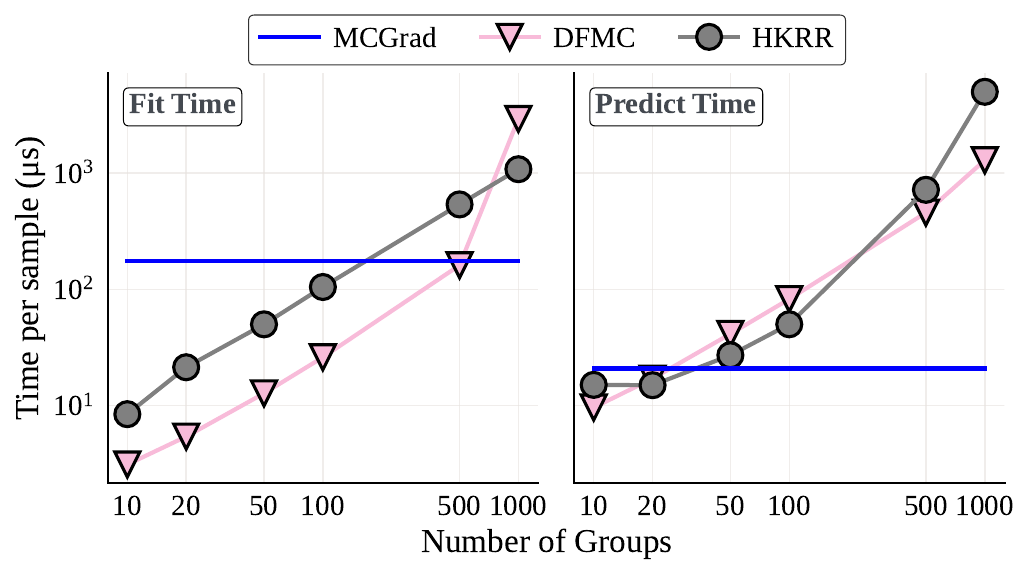}
    \vspace{-0.5cm}
    \caption{\changed{Computational time of multicalibration algorithms with varying numbers of groups. (Left) Fit time and (Right) predict time in seconds as a function of the number of groups on a log-log scale. Runtime for DFMC and HKRR increases with the number of groups, while MCGrad is constant.}}
    \label{fig:computational_cost}
\end{figure}


\paragraph{\bf Q1. \ourmethod{} vs existing baselines on \emph{unspecified} groups.} Figure~\ref{fig:mce_sigmascale} (left-hand side) shows the Multicalibration Error (MCE) for all compared methods using all available features. Overall, \ourmethod{} outperforms all baselines on $10$ out of the $11$ datasets by reducing the base predictor's MCE between $4\%$ (Credit) and $93\%$ (ACSEmploy), and obtaining an average reduction of $56.1\%$. In contrast, \textsc{DFMC}, \textsc{Isotonic}, and \textsc{HKRR} only achieve an average reduction of, respectively, $11.9\%$, $9.6\%$, and $1.4\%$. In addition, \ourmethod{} never harms the base predictor's performance: even on MEPS, where there is low evidence of \textsc{BasePred}'s miscalibration (3.9 MCE), it slightly improves the MCE by $2.8\%$. On the other hand, \textsc{Isotonic}, \textsc{DFMC}, and \textsc{HKRR} increase (worsen) the MCE of \textsc{BasePred} on $3$, $3$ and $4$ datasets respectively.

Table~\ref{tab:metrics_comparison_baselines} reports the average performance over the datasets. \ourmethod{} achieves by far the best average log loss improvement ($10.4\%$) and PRAUC improvement ($8.1\%$) over the base predictor. Besides these average improvements over datasets, \ourmethod{} never harms performance, which helps build the necessary trust among ML engineers to deploy it in production environments. Other methods don't have this guarantee. For instance, while \textsc{DFMC} improves model performance metrics by a lower $1.8\%$ and $0.76\%$, it harms PRAUC and log loss on two datasets. \textsc{Isotonic} and \textsc{HKRR} even decrease the PRAUC on average by, respectively, $1\%$ and $5\%$ with a negligible change in log loss. For each metric, we rank all compared methods from $1$ (best) to $5$ (worst) and report the average ranking position in Table~\ref{tab:metrics_comparison_baselines}. A lower average rank indicates how often the method is preferred, as opposed to the average performance, which reflects the magnitude. \ourmethod{} consistently ranks as $\#1$ baseline on log loss and PRAUC, while only ranking third for MCE on the MEPS dataset. Finally, \ourmethod{} ranks second best method in terms of global calibration (ECCE), losing against Isotonic Regression, which is expected because it is trained to directly optimize the global calibration of the base predictor.

\paragraph{\bf Q2. \ourmethod{} vs existing baselines on \emph{prespecified} groups.} To verify that good performance on the MCE metric for all groups implies also protection for prespecified groups, we report results just for these groups in Figure~\ref{fig:mce_sigmascale} (right-hand side). Overall, \ourmethod{} outperforms all baselines on $5$ out of $11$ datasets achieving a $54\%$ average MCE improvement, and is followed by \textsc{DFMC} that is the best baseline on three datasets (\textsc{ACSMobility}, HDMA, \textsc{ACSTravel}) and achieves an average MCE improvement of $51\%$. \textsc{HKRR} obtains an average improvement in MCE of $36\%$, while \textsc{Isotonic} only achieves an improvement of $16\%$. Note that \textsc{HKRR} and \textsc{DFMC} have direct access to these groups, while \ourmethod{} only has access to the features used to define the groups.

\paragraph{\bf Q3. The impact of multiple rounds on \ourmethod{}} Because \ourmethod{} recurrently improves its own predictions by using $T$ rounds, one relevant question is whether $T>1$ is really necessary in practice. For this goal, we compare \ourmethod{} with its variant \textsc{Only One Round} that forces the number of rounds $T = 1$.\footnote{Early stopping can possibly return $T=0$.} Figure~\ref{fig:ablation} (purple line) shows the relative improvement of \textsc{Only One Round} for the four considered metrics and all benchmark datasets. In the majority of cases, it is evident that \textsc{Only One Round} deteriorates \ourmethod{}'s performance: on the MCE and ECCE axes, it more than doubles its value (relative improvement $< -100$) for, respectively, $6$ and $5$ datasets, while, on the log loss and PRAUC axes, it worsens the performance by approximately $2\%$ or more on, respectively, $8$ and $6$ out of $11$ datasets. Interestingly, the largest deterioration of performance corresponds to \ourmethod{} running for the highest number of rounds: the optimal $T$ for \textsc{ACSEmploy}, \textsc{ACSIncome}, \textsc{ACSPublic}, and \textsc{ACSMobility} is, respectively, $20$, $35$, $27$ and $22$.

\paragraph{\bf Q4. Impact of \emph{rescaling the logits} and the \emph{regularization through min sum Hessian in leaf}}

\emph{Rescaling the logits} of all predictions by learning a constant speeds up \ourmethod{} training by reducing rounds, especially on very large datasets. As a downside, it might affect~\ourmethod{}'s performance. Figure~\ref{fig:ablation} (light blue) shows the relative deterioration after removing rescaling. Effects are mild and balanced: removal worsens MCE, log loss, and PRAUC on $6$, $6$, and $7$ datasets respectively, with average drops of $9\%$, $0.5\%$, $0.03\%$. 

LightGBM’s default \emph{min sum Hessian leaf} (MSHL) is $0.001$ with a min leaf sample size of $20$. We claim that higher MHL values prevent probabilities from pushing to extremes, reducing overfitting (Section~\ref{sec:safe_deployment}). We compare \ourmethod{} with MHL$=0.001$ versus our default of $20$. Figure~\ref{fig:ablation} (dark blue bar) shows mild deterioration using $0.001$, likely due to limited dataset sizes. On average, lower MHL slightly harms performance, but this becomes evident on \textsc{ACSEmploy} (the largest dataset), where the MHL variant runs only $2$ rounds (vs. $10$) because our regularization mechanism stops the training earlier to avoid overfitting.

\paragraph{\bf Q5. Computational time of \ourmethod{} vs multicalibration baselines as the number of groups scales.}

\changed{Scalability can be a relevant bottleneck for multicalibration deployment in production environments. To evaluate this, we measure the training (fit) and inference (predict) times of \ourmethod{}, \textsc{HKRR}, and \textsc{DFMC} as the number of protected groups increases from $10$ to $1000$, using $\approx 1M$ training and $\approx 300k$ test samples from the ACSEmploy dataset.}

\changed{Figure~\ref{fig:computational_cost} shows the methods' computational time per sample (in microseconds) as a function of the number of groups. Notably, the gap between \ourmethod{} and each baseline becomes larger as group count increases: at $1000$ groups, \ourmethod{} maintains nearly constant inference time ($\sim 20\mu s$) while HKRR and DFMC require, respectively, $5019$ and $1292$ $\mu s$, meaning \ourmethod{} achieves less than $2\%$ of their latency overhead. This gap becomes even larger in production settings where models with $100$s of features can generate tens of thousands of potential protected groups.}

\changed{More importantly, these seemingly small latency differences have strong business impact. Production models typically serve online inference where even millisecond-scale increases in response time negatively affect the system performance and the user experience. Existing research claims that an \emph{increase} of latency directly correlates with \emph{reduced} user engagement and online business metrics~\cite{kohavi2014seven}. \ourmethod{}'s sub-$20 \mu s$ inference overhead allows deployment in production, while baseline methods requiring few milliseconds per prediction can be impractical for production use.
}

\section{Results in Production Deployment}
\label{sec:experiments}
We describe the results obtained by using \ourmethod{} in a real-world production system. 
 \ourmethod{} has been deployed and used in production at Meta. Specifically, we have integrated \ourmethod{} into two ML training platforms at Meta: the publicly released \textsc{Looper}~\cite{markov2022looper}, and an internal platform that we refer to as \textsc{MLplatform\#2}. The evaluation consists of a total of $147$ binary classification models that serve live production traffic: $27$ models on Looper and $120$ models on the other platform. On both ML training platforms, we run \ourmethod{} with default hyperparameters.

\paragraph{\textbf{\textsc{Looper} Results.}}
\textsc{Looper}~\cite{markov2022looper} is a system that automates model training, online A/B test comparisons between the new model variant and the existing production model, and launch decisions on whether or not to promote the new model variant. These launch decisions are made based on whether the new model variant outperforms the existing production model on the selected evaluation metric with statistical significance.

For 27 binary classification models that serve active production traffic on \textsc{Looper}, we created a variant of the model that applies~\ourmethod{} as calibration post-processing. We started online A/B tests to compare these 27 models against the active production models, all of which are GBDTs. Those production models all had previously already applied Platt scaling~\cite{platt1999probabilistic}.

We found that on $24$ out of $27$ models, the variant with \ourmethod{} statistically significantly outperformed the same model with Platt scaling~\cite{platt1999probabilistic}, resulting in the promotion of the \ourmethod{}-calibrated model variant to become the primary production model.

Looper also monitors area under the \emph{precision-recall curve (PRAUC)} on online production data. PRAUC improved on $24$ of the $27$ models and was neutral on the rest. Due to data retention, we are able to calculate the exact PRAUC for four of the $27$ Looper models for which we most recently deployed \ourmethod{}, as well as the Brier score~\cite{brier1950verification}. Table~\ref{tab:looper_results} summarizes the results on those models.

\begin{table}[tb]
\centering
\resizebox{\columnwidth}{!}{
    \begin{tabular}{lcccc}
      \toprule
      Model & \multicolumn{2}{c}{PRAUC (\,$\uparrow$\,)} & \multicolumn{2}{c}{Brier Score (\,$\downarrow$\,)} \\
      \cmidrule(lr){2-3} \cmidrule(lr){4-5}
             & \textsc{Platt Scaling} & \ourmethod{} 
             & \textsc{Platt scaling} & \ourmethod{} \\
      \midrule
      \#1 & 0.5295 & \textbf{0.6565} (+23.9\%) & Not available& Not available\\
      \#2 & 0.1824 & \textbf{0.2433} (+33.4\%) & 0.0117 & \textbf{0.0116} (-0.45\%)\\
      \#3 &  0.2161 & \textbf{0.2300} (+6.4\%) & 0.0642 & \textbf{0.0632} (-1.56\%)\\
      \#4 & 0.6268 & \textbf{0.6354} (+1.4\%) & 0.0394 & \textbf{0.0365} (-7.24\%)\\
      \bottomrule
    \end{tabular}
}
\caption{Online results for four models on Looper, where either Platt Scaling or \ourmethod{} is used for post-processing.}
\label{tab:looper_results}
\vspace{-0.1cm}
\end{table}

\paragraph{\textbf{\textsc{MLplatform\#2} Results.}}

\begin{table}[tp]
\centering
\resizebox{\columnwidth}{!}{
\begin{tabular}{r|cccc||cccc}
  \toprule
  \multicolumn{1}{c|}{} & \multicolumn{4}{c||}{\textbf{First Time Period}} & \multicolumn{4}{c}{\textbf{Second Time Period}} \\
  \cmidrule(lr){2-5} \cmidrule(lr){6-9}
  Percentile & Log loss & PRAUC & AUROC & ECE & Log loss & PRAUC & AUROC & ECE \\
  \midrule
  10th & -26.50\% & +3.58\% & +3.15\% & -93.88\% & -16.34\% & +6.31\% & +1.97\% & -90.84\% \\
  25th & -10.35\% & +0.93\% & +0.52\% & -87.45\% & -3.43\% & +1.92\% & +1.10\% & -83.07\% \\
  50th & -1.39\% & +0.32\% & +0.17\% & -53.83\% & -0.86\% & +0.06\% & +0.16\% & -58.19\% \\
  75th & -0.51\% & +0.00\% & +0.03\% & -14.54\% & -0.11\% & -0.95\% & 0.0\% &  -21.96\% \\
  90th & +0.02\% & -2.07\% & -0.20\% & +7.56\% & +1.75\% & -4.85\% & -0.39\% & +26.23\% \\
  \bottomrule
  
\end{tabular}
}
\caption{Summary statistics for the two time periods of the impact of \ourmethod{} on $120$ (first period, left) and $65$ (second period, right) ML models on \textsc{MLplatform\#2}.}
\label{tab:prod_summary_stats}
\vspace{-0.1cm}
\end{table}

This ML training platform is widely used at Meta. ML model training scripts on this platform define the training set, a validation set, the test set, the ML model architecture, and the set of model evaluation metrics. The ML platform allows running an \emph{evaluation flow} that trains the model on the training set and evaluates the model on the test set using the specified set of evaluation metrics, or a \emph{publish flow} that prepares an API endpoint that can generate predictions for live production traffic. We integrated \ourmethod{} into the evaluation flow so that we obtain an additional set of evaluation results for \ourmethod{}.

For every ML model evaluation on this ML platform, we collected parallel evaluation results \emph{with} and \emph{without} \ourmethod{}. We conducted data collection in two separate periods. Table~\ref{tab:prod_summary_stats} summarizes the results. In the first data collection period, we obtained results on $120$ binary classification production models. We found that applying \ourmethod{} post-processing improved log loss for $88.7\%$ of the models, compared to the production models without it. PRAUC improved for $76.7\%$ of the models, AUROC for $80.2\%$, and Expected Calibration Error (ECE)~\cite{naeini2015obtaining} for $86\%$. Note that on most models where no improvement was found, there was no metric degradation either, due to the early stopping. In Table~\ref{tab:prod_summary_stats} we can see that the worst 10th-percentile metric effects of~\ourmethod{} still result in degradations. We manually investigated a sample of those models, and found that these were mostly explained by errors by ML practitioners in the train and test set specification. E.g., in some cases the train and test set were clearly from a different population, or sample weights were defined in the train set but not in the test set.

In a second data collection period, we obtained results on $65$ binary classification models. We found that \ourmethod{} provided improvements over the production models on the log loss metric for $80.3\%$ of the models, on the PRAUC metric for $75.4\%$ of the models, on the AUROC for $79.3\%$, and ECE for $86\%$.



\section{Learnings from Production Deployment}
\label{sec:learnings}

Despite the recent focus of the academic community on multicalibration, it has seen little to no uptake in industry. Here we summarize various learnings from applying multicalibration in industry through production deployments of~\ourmethod{} and of the Multicalibration Error metric (MCE)~\cite{guy2025measuringmulticalibration}.\\

\noindent\textbf{Learning 1. Multicalibration has business value.}\\
Our results demonstrate that \ourmethod{} can improve real-world models' multicalibration as well as performance metrics that tend to correlate with business outcomes, such as PRAUC. Furthermore, baselines such as Isotonic Regression do not achieve this. This contradicts observations by~\citet{hansen2024multicalibration} that (1) Isotonic Regression is often competitive compared to algorithms designed specifically for multicalibration, and (2) practitioners may face a trade-off between multicalibration and predictive performance.\\


\noindent\textbf{Learning 2. Practitioners struggle to define protected groups.}\\
Many multicalibration algorithms~\cite{pmlr-v80-hebert-johnson18a,jin2025} require manual specification of the protected groups. In our experience deploying multicalibration in production, we found that many ML engineers struggle to manually define relevant protected groups, and lack guidelines and frameworks for defining them. This experience is in line with findings from a series of interviews on AI fairness adoption in industry~\cite{holstein2019improving}, where it turns out that practitioners often lack knowledge on different types of biases and cannot identify the relevant biases to correct for. Hence, our experience is that in order to succeed in obtaining industry adoption, a method needs to relieve users of the burden of manually defining protected groups, like~\ourmethod{} does.\\


\noindent\textbf{Learning 3. Practitioners consider adopting multicalibration methods only when they work \emph{out-of-the-box}, reliably, and with no risk of harming model performance.}\\
Many multicalibration algorithms have hyperparameters. We found industry practitioners to be open to applying multicalibration to production ML models, but only if it takes limited effort. Hyperparameter tuning is often seen as complex or time-consuming. They also perceive it as a reliability risk: if a method requires hyperparameter tuning to work well, then they wonder if the hyperparameter configuration will still work well in subsequent training runs.

Therefore, for a multicalibration algorithm to find successful adoption in industry, it is important that it works \emph{out-of-the-box}, without the need for hyperparameter tuning. Moreover, while practitioners care about multicalibration, we found that they are not willing to accept a degradation in metrics like the log loss and the PRAUC, which are often believed to relate to topline metrics. 

We defined a set of hyperparameter default values for~\ourmethod{} that achieves (1) no degradation in PRAUC and log loss, and (2) substantial reduction in MCE. \changed{By pre-computing defaults through meta-analysis across $35$ datasets, we enable \ourmethod{} to deliver consistent improvements without requiring domain expertise or extensive experimentation.} Results in Sections~\ref{sec:benchmark} and~\ref{sec:experiments} confirm that those default values consistently produce log loss and PRAUC improvements over models calibrated with Platt scaling (on Looper) or uncalibrated models (on \textsc{MLplatform\#2}), and consistently yield log loss, PRAUC, and MCE improvements on the public datasets.

\section{Related Work}
\label{sec:related}

\changed{\textbf{Multicalibration} was introduced by \citeauthor{pmlr-v80-hebert-johnson18a}~\cite{pmlr-v80-hebert-johnson18a} as a learning objective requiring predictors to be calibrated across a large collection of subgroups. Subsequent work has deepened its theoretical foundations, exploring computational complexity and connections to other learning paradigms~\cite{detommaso2024multicalibration,NEURIPS2023_7d693203,happymap23,blasiok_et_al:LIPIcs.ITCS.2024.17}.}

\changed{Extensions such as \emph{low-degree multicalibration}~\cite{Gopalan2022LowDegreeM} generalize subgroup definitions using weight functions, showing that restricting to low-degree polynomials can reduce computational complexity. \emph{Swap multicalibration}~\cite{NEURIPS2023_7d693203} strengthens the original concept and is satisfied by some existing algorithms; it is also equivalent to \emph{swap omniprediction} for certain loss function classes.
The notion of \emph{omnipredictors}~\cite{omnipredictors} further generalizes agnostic learning by requiring minimization guarantees for a class of loss functions~\cite{okoroafor25}. Connections to \emph{multi-objective learning} have also been explored, framing multicalibration as a game dynamics problem~\cite{haghtalab2023}.}

\textbf{Algorithms for multicalibration.} The early proposals of multicalibration algorithms, such as HKRR~\cite{pmlr-v80-hebert-johnson18a} (e.g., implemented in MCBoost~\cite{pfisterer2021} or by~\citet{hansen2024multicalibration}) and LSBoost~\cite{pmlr-v202-globus-harris23a}, rely on discretizing the output space and calibrating the prediction for each discretization level until convergence. A discretization-free multicalibration algorithm was proposed in \cite{jin2025}, requiring group functions as input and using a gradient boosting machine to fit an ensemble of depth-two decision trees to the residuals of an uncalibrated predictor using both the original features and the predictions of the uncalibrated predictor. For guaranteeing multicalibration within an additive error, this algorithm requires a loss saturation condition to hold, i.e., that the squared-error loss can be further reduced only by a small amount. Our algorithm uses gradient boosting similarly but does not require pre-specified groups. Instead, it leverages multi-round recursive gradient boosting and supports arbitrary user-defined decision trees in each round.

\changed{\section{Conclusion \& Limitations}}
\label{sec:conclusion}
We tackle the problem of designing a multicalibration algorithm without having access to pre-specified protected sets, that is fast and safe to deploy in production. These three aspects are relevant for practitioners as they are often (1) unable to define  groups for their task, (2) forced to limit the overhead in terms of memory and speed, and (3) required to guarantee that the algorithm does not harm the base predictor. We proposed \ourmethod{}, a novel algorithm that achieves multicalibration by finding and fixing regions of the feature space where the miscalibration is large. Because this might create new regions with evidence of miscalibration, \ourmethod{} employs multiple rounds, where in each round it corrects its own output. In addition, \ourmethod{}'s implementation is fast, lightweight, and safe to deploy. We showed that \ourmethod{} outperforms existing baselines on benchmark datasets and provided results obtained in a large-scale industry production deployment at Meta consisting of hundreds of production models. Finally, we included practical takeaways from our experience applying multicalibration in industry.

\changed{\paragraph{Limitations.}
Although \ourmethod{} shows improvements, several limitations remain. First, guardrails against harming the base predictor may not be sufficient under concept drift. Specifically, \ourmethod{} can potentially worsen performance on post-drift data distributions, as multicalibration is only robust to certain types of distribution shift~\cite{roth22,kim2022universal}. Second, \ourmethod{} is currently limited to binary classification settings. Although multiclass extensions exist, they often suffer from reduced sample efficiency and increased complexity, creating challenges for real-world deployment~\cite{zhao2021calibrating}. Third, while \ourmethod{} can be applied to other data modalities such as text and images (using embeddings), these applications have not been explored in this paper. Future research may study \ourmethod{}’s (1) robustness to broader forms of drift, (2) extension to multiclass/regression problems, and (3) effectiveness on other data modalities.}

\section*{Acknowledgments}
We gratefully acknowledge Nicolas Stier, Udi Weinsberg, Ido Guy, Thomas Leeper, and Mark Tygert for their valuable discussions and support, which contributed to the success of this work.



\bibliographystyle{ACM-Reference-Format}
\bibliography{references}

\appendix
\section{\texorpdfstring{Bridging Multicalibration Error and $\alpha-$MC}{Bridging Multicalibration Error and a-MC}}\label{sec:theory}

Several definitions of approximate multicalibration, known as $\alpha$-multicalibration, exist in the literature. A common approach uses a uniform bound on absolute deviation $\Delta_{h,g}(f) \le \alpha$ where the precise choice of $h$ and $g$ varies \cite{blasiok_et_al:LIPIcs.ITCS.2024.17,happymap23,Gopalan2022LowDegreeM,haghtalab2023}. Another common choice is a uniform bound on the conditional deviation \cite{pmlr-v80-hebert-johnson18a,shabat2020sample}, i.e. \\ $\delta_{h,g}(f):=|\expectation{h(X)g(f(X))(Y-f(X))\mid h(X)g(f(X))=1}| \le \alpha.$
Recent work~\cite{haghtalab2023} improved these by showing a tighter bound $\Delta_{h,g}(f) \le \alpha \sqrt{\expectation{h(X)}}$ can be achieved with no extra requirements.

We next show that Definition~\ref{def:multicalibration}, for the expected deviation corresponding to the sample mean deviation over dataset $D$, i.e.,
$$
\expectationD{h(X)g(f(X))(Y-f(X))}  = \frac{1}{n}\sum_{k=1}^n h(x_k)g(f(x_k)) (y_k - f(x_k)),
$$
with scale parameter chosen 
$\tau_h(f)=\sqrt{\expectationD{h(X)f(X)(1-f(X))}}$,
matches with the MCE metric defined in Eq.~(\ref{equ:mcedef}).

\begin{proposition}\label{prop:mc-mce}
    Given a set $\mathcal{H}$ of group membership functions, a probabilistic predictor $f$ is $\alpha$-multicalibrated  with respect to $\mathcal{H}$, with the scale parameter $\tau_h(f)$, if and only if  $\text{MCE}(f) \le \alpha \sqrt{n}$.
\end{proposition}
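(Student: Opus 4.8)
The plan is to prove both directions simultaneously by reducing the $\alpha$-MC condition and the MCE to a common pair of per-group quantities and verifying that the two inequalities coincide after rescaling $\alpha$ by $\sqrt{n}$. For a fixed $h\in\mathcal{H}$, denote by $n_h$ the number of in-group points, by $S_h=\sum_{k:h(x_k)=1}f(x_k)(1-f(x_k))$ the in-group Hessian sum, and by $E_h=\max_{1\le i\le j\le n_h}\big|\sum_{k=i}^{j}(y^h_{(k)}-f(x^h_{(k)}))\big|$ the maximal (unnormalized) cumulative deviation over contiguous blocks of the in-group points sorted by score $f(x^h_{(1)})\le\cdots\le f(x^h_{(n_h)})$. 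The three steps are: (i) evaluate $\max_{g\in\mathcal{G}}\Delta_{h,g}(f)$ in terms of $E_h$; (ii) express the MCE summand $\text{ECCE}_h/\sigma_h$ and the scale $\tau_h$ in terms of $E_h$ and $S_h$; and (iii) chain the equivalences.

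For step (i), fix $h$ and recall that $g=\One_I$ ranges over indicators of all intervals $I\subseteq[0,1]$. Because $\Delta_{h,g}(f)=\tfrac{1}{n}\big|\sum_{k:h(x_k)=1,\,f(x_k)\in I}(y_k-f(x_k))\big|$, and because intersecting an interval $I$ with the in-group scores picks out exactly a contiguous block of the score-sorted sequence (and every such block arises from some interval), I would conclude $\max_{g\in\mathcal{G}}\Delta_{h,g}(f)=E_h/n$. I expect this interval-to-contiguous-block correspondence to be the only genuine subtlety; score ties require a brief argument that they leave the attainable maximum unchanged.

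For step (ii), the group-restricted definitions share a common normalization whose outer factor cancels in the ratio, so that $\text{ECCE}_h(f)/\sigma_h(f)=E_h/\sqrt{S_h}$ irrespective of whether one normalizes by $n$ or by $n_h$; consequently $\text{MCE}(f)=\max_{h}E_h/\sqrt{S_h}$. Expanding the sample mean in~(\ref{equ:sigmascale}) gives $\tau_h(f)=\sqrt{S_h/n}$. For step (iii), the $\alpha$-MC condition holds for all $g$ iff $\max_{g}\Delta_{h,g}(f)\le\alpha\tau_h(f)$ for every $h$, i.e. iff $E_h/n\le\alpha\sqrt{S_h/n}$ for every $h$. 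Multiplying by $n$ and dividing by $\sqrt{S_h}$ turns this into $E_h/\sqrt{S_h}\le\alpha\sqrt{n}$ for every $h$, and taking the maximum over $h$ gives exactly $\text{MCE}(f)\le\alpha\sqrt{n}$. Each manipulation is an equivalence, so both implications of the iff follow at once; the degenerate groups with $n_h=0$ (hence $S_h=0$) carry zero deviation and drop out of both maxima.
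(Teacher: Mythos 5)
Your proof is correct and takes essentially the same route as the paper's: both arguments reduce $\max_{g\in\mathcal{G}}\Delta_{h,g}(f)$ and $\mathrm{ECCE}_h(f)/\sigma_h(f)$ to the common per-group quantities (your $E_h$ and $S_h$; the paper writes this compactly as $\max_{g}\Delta_{h,g}(f)=\tfrac{n_h}{n}\mathrm{ECCE}_h(f)$), note that the normalizations cancel so $\tau_h(f)=\sqrt{S_h/n}$ aligns the two sides, and chain the resulting equivalences into $\mathrm{MCE}(f)\le\alpha\sqrt{n}$. The only difference is presentational: you make the interval-to-contiguous-block correspondence and the tie caveat explicit, which the paper leaves implicit.
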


\begin{proof} Note that
$
\Delta_{h,g}(f) = \frac{1}{n}\big| \sum_{k=1}^n h(x_k)g(f(x_k))(y_k-f(x_k))\big|.
$
Combining with the definition of $\text{ECCE}_h(f)$, we get that \[\max_{g \in \mathcal{G}}  \Delta_{h,g}(f)  = \frac{n_h}{n} \text{ECCE}_h(f). \]

\noindent Plugging in the definition of $\text{MCE}(f)$, as in Eq.~(\ref{equ:mcedef}), we have
\begin{equation*}
\resizebox{.475\textwidth}{!}{
$\displaystyle
\frac{1}{\sqrt{n}}\text{MCE}(f) \!=\! \frac{1}{\sqrt{n}} \max_{h \in \mathcal{H}} \!\frac{\text{ECCE}_h(f)}{\sigma_h(f)}
\!=\!\! \max_{h \in \mathcal{H}, g \in \mathcal{G}} \!\!\frac{\sqrt{n}  \Delta_{h,g}(f) }{n_h \sigma_h(f)} \!=\! \max_{h \in \mathcal{H}, g \in \mathcal{G}} \!\frac{ \Delta_{h,g}(f)}{\tau_h(f)}.$
}
\end{equation*}
Now, from the last expression, $\alpha$-MC with respect to $\mathcal{H}$ holds if and only if $\text{MCE}(f)\leq \alpha \sqrt{n}$. 
\end{proof}

\section{Convergence Analysis}
\label{app:conv}
We analyze the convergence of Algorithm~\ref{alg:gbmct_new} under specific conditions for the GBM used in each round. We show that (1) the loss does not increase with more rounds and decreases unless the predictor has converged, and (2) the $\alpha$-multicalibration error after $T$ rounds is bounded by the difference between predictors $F_{T+1}$ and $F_T$ and that $0$-multicalibration is achieved at convergence.
Unlike Section~\ref{sec:achieving_multicalibration}, which assumes adding an optimal linear combination of weak learners each round, here the GBM incrementally adds regression trees to $F_t$ to form $F_{t+1}$.



We consider a GBM that, in round $t$, iteratively combines $M_t$ regression trees as 
$F_t^{m+1}(x) = F_t^{m}(x) + \rho_m h_{j_m}(x, f_t(x))
$ where $j_m$ is the index of the regression tree added in the $m$-th iteration, $\rho_m$ is the step size, and $m = 0,1,\ldots,M_t-1$.
Let 
$$
\mathcal{L}_t(w) = \expectationD{\mathcal{L}\left(F_t(X)+\sum_{k=1}^K w_k h_k(X, f_t(X)),Y\right)}.
$$
The gradient of the loss function with respect to $w_k$ is given by
$$
\nabla_k \mathcal{L}_t(w) = \expectationD{h_k(X,f_t(X))(Y-f_{t+1}(X))}
$$
where $f_{t+1}(x) = \mu(F_t(x)+\sum_{k=1}^K w_k h_k(x, f_t(x)))$. 

A GBM can be shown to correspond to a coordinate gradient descent in the coefficient space \cite{randGBM}, with updates of the form
$
w^{m+1} = w^m - \rho \nabla_{j_m} \mathcal{L}_t(w^m) e_{j_m}
$,
where $e_j$ is the $j$-th standard basis vector in $\mathbb{R}^K$ and $\rho$ is the step size. 

\paragraph{Decreasing Loss Property}

\begin{proposition}\label{lem:dec} Assume that the loss function $\mathcal{L}$ is a convex and $L$-smooth (for the log loss, $L=1/4$) and that the step size $\rho$ is set as $\rho = 1/L$. Then, for every $t\geq 0$, the following inequality holds
\begin{equation*}
    \expectationD{\mathcal{L}(F_{t+1}(X), Y)} \leq \expectationD{\mathcal{L}(F_{t}(X), Y)}
\end{equation*}
and is strict if $\ \nabla_{j_m}\mathcal{L}_t(w^m)\neq 0$ for some $m \in \{0,\ldots, M_t-1\}$.
\end{proposition}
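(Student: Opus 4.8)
The plan is to read round $t$ of the GBM as a run of coordinate gradient descent on the convex objective $\mathcal{L}_t(w)$ and to invoke the standard descent lemma for smooth functions. First I would pin down the two endpoints of this run. Since the combination $F_t(X) + \sum_k w_k h_k(X, f_t(X))$ reduces to $F_t(X)$ at $w = 0$, the natural initialization $w^0 = 0$ (corresponding to $F_t^0 = F_t$) gives $\mathcal{L}_t(w^0) = \expectationD{\mathcal{L}(F_t(X), Y)}$, while after the $M_t$ tree-additions the terminal iterate $w^{M_t}$ (corresponding to $F_t^{M_t} = F_{t+1}$) gives $\mathcal{L}_t(w^{M_t}) = \expectationD{\mathcal{L}(F_{t+1}(X), Y)}$. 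It therefore suffices to show that coordinate descent does not increase $\mathcal{L}_t$ between these two iterates.

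The core estimate is the per-iteration descent inequality. Because $\mathcal{L}$ is $L$-smooth in its logit argument and the weak learners are bounded (for log loss $\mathcal{L}''(F) = \mu(F)(1-\mu(F)) \le 1/4$, giving $L = 1/4$), the restriction of $\mathcal{L}_t$ to any single coordinate is $L$-smooth. Applying the one-dimensional descent lemma to the update $w^{m+1} = w^m - \rho\,\nabla_{j_m}\mathcal{L}_t(w^m)\,e_{j_m}$ with $\rho = 1/L$ yields
\begin{equation*}
    \mathcal{L}_t(w^{m+1}) \le \mathcal{L}_t(w^m) - \frac{1}{2L}\big(\nabla_{j_m}\mathcal{L}_t(w^m)\big)^2,
\end{equation*}
where the factor $1/(2L)$ is exactly the value of $\rho - L\rho^2/2$ at the optimal step $\rho = 1/L$.

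Summing this bound over $m = 0, 1, \ldots, M_t - 1$ telescopes to
\begin{equation*}
    \expectationD{\mathcal{L}(F_{t+1}(X), Y)} \le \expectationD{\mathcal{L}(F_t(X), Y)} - \frac{1}{2L}\sum_{m=0}^{M_t-1}\big(\nabla_{j_m}\mathcal{L}_t(w^m)\big)^2.
\end{equation*}
Each term in the sum is nonnegative, which gives the claimed monotonicity, and the inequality becomes strict as soon as a single $\nabla_{j_m}\mathcal{L}_t(w^m) \ne 0$ in some iteration $m$, which is precisely the stated condition. I would then remark that the optional rescaling step, which chooses $\theta_t$ to minimize the loss of $\theta_t\cdot(F_{t-1} + h_t)$, can only decrease the objective further since $\theta = 1$ is always feasible, so including it leaves the conclusion intact. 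Note also that the sign convention used for $\nabla_k\mathcal{L}_t$ is immaterial here, as only its square enters.

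The step I expect to be the main obstacle is transferring the smoothness constant: the hypothesis is stated for $\mathcal{L}$ as a function of the scalar logit, but the descent lemma is applied to $\mathcal{L}_t$ as a function of the coefficient $w_{j_m}$. Making this rigorous requires bounding the coordinate-wise second derivative $\expectationD{\mathcal{L}''(\cdot)\,h_{j_m}(X, f_t(X))^2}$ by $L$, and hence a normalization of the weak learners (e.g. $|h_k| \le 1$) so that the one-dimensional restriction inherits $L$-smoothness. Once that bookkeeping is in place, the remainder is the routine smooth-convex coordinate-descent argument.
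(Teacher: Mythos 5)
Your proof follows essentially the same route as the paper's: the identical per-iteration descent bound $\mathcal{L}_t(w^{m+1}) \le \mathcal{L}_t(w^m) - \frac{1}{2L}\big(\nabla_{j_m}\mathcal{L}_t(w^m)\big)^2$ obtained from $L$-smoothness with step $\rho = 1/L$, followed by summing over $m=0,\ldots,M_t-1$ and identifying $\mathcal{L}_t(w^0)$ and $\mathcal{L}_t(w^{M_t})$ with $\expectationD{\mathcal{L}(F_t(X),Y)}$ and $\expectationD{\mathcal{L}(F_{t+1}(X),Y)}$. The normalization issue you flag at the end --- that coordinate-wise smoothness of $\mathcal{L}_t$ requires bounding $\expectationD{\mathcal{L}''(\cdot)\,h_{j_m}(X,f_t(X))^2}$ by $L$, hence a bound such as $|h_k|\le 1$ on the weak learners --- is a genuine gap, but it is present in the paper's proof as well, which applies the coordinate descent lemma with the same constant $L$ without comment.
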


\begin{proof}
    Note the following relations, for any $\rho_m$:
\begin{equation*}
\mathcal{L}_t(w^{m+1}) \!\leq \!\mathcal{L}_t(w^m + \rho_m e_{j_m}) \!\leq \!\mathcal{L}_t(w^m) + \rho_m \nabla \mathcal{L}_t(w^m)^\top e_{j_m} \!+\frac{L}{2}\rho_m^2.
\end{equation*}

By taking $\rho_m = - (1/L)\nabla_{j_m} \mathcal{L}_t(w^m)$, we have
$$
\mathcal{L}_t(w^{m+1}) \leq \mathcal{L}_t(w^m) - \frac{1}{2L} (\nabla_{j_m} \mathcal{L}_t(w^m))^2.
$$
It follows that
$
\mathcal{L}_t(w^{M_t})-\mathcal{L}_t(w^0)\leq -\frac{1}{2L}\sum_{m=0}^{M_t-1} (\nabla_{j_m} \mathcal{L}_t(w^m))^2
$,
which allows us to conclude that $\mathcal{L}_t(w^{M_t})-\mathcal{L}_t(w^0)\leq 0$ where the inequality is strict if $\nabla_{j_m} \mathcal{L}_t(w^m)\neq 0$ for some $0
\leq m<M_t$. 
\end{proof}

For a GBM that, in each iteration, greedily selects the weak learner with the largest absolute value of the loss gradient, i.e., 
$
j_m \in \arg\max_j |\nabla_j \mathcal{L}_t(w^m)|, 
$
we can establish a convergence rate to the loss of the optimum linear combination of weak learners using Theorem~4.2~\cite{randGBM}. Let $w^*(t) = \arg\min_w \mathcal{L}_t(w)$. Then, 
\begin{equation}
\expectationD{\mathcal{L}(F_{t+1}(X),Y)}- \mathcal{L}_t(w^*(t))\leq 0.5\cdot \mathrm{Dist_t^2}\frac{1}{M_t}
\label{equ:rate}
\end{equation}
where $\mathrm{Dist}_t$ denotes the distance between the level set $\{w: \mathcal{L}_t(w)\leq \mathcal{L}_t(w^0)\}$ and $w^*(t)$, as defined in~\cite{randGBM}. 

This result justifies modeling each round as updating the predictor with an optimal ensemble of weak learners that minimizes loss. Proposition~\ref{lem:dec} shows the training loss never increases with more rounds and converges to a limit as rounds go to infinity.



\paragraph{A Bound on the Multicalibration Error.} The bound in Proposition~\ref{prop:mc} provides a theoretical upper-bound guarantee, showing that the multicalibration error converges to zero when $C_T$ is bounded by a constant, the gap between prediction values $F_{T+1}$ and $F_T$ vanishes as the number of rounds $T$ increases, and the GBM achieves optimal loss in round $T$. Before proving this, we need a preliminary result:

\begin{lemma}\label{lem:alpha} If $f_T$ is such that $|\expectation{h(X)g(f_T(X))(y-f_T(X))}|\geq \alpha \sqrt{\expectation{h(X)f_T(X)(1-f_T(X))}}$, for some $h\in \mathcal{H}$ and $g\in \mathcal{G}$, then there exists $\tilde{w}\in \mathbb{R}^K$ such that for $\tilde{h}(x) = \sum_{k=1}^K \tilde{w}_k h_k(x,f_T(x))$:
\begin{eqnarray*}
    \alpha \leq \frac{2}{\sqrt{3}}C_T\sqrt{\expectation{\mathcal{L}(F_T(X), Y)}-\expectation{\mathcal{L}(F_T(X)+ \tilde{h}(X),Y)}}\\
    \text{ where }\ C_T^2 \coloneqq \max_{h\in \mathcal{H},g\in \mathcal{G}}\frac{\expectation{g(f_T(X))\mid h(X)=1}}{\expectation{f_T(X)(1-f_T(X))\mid h(X)=1}}.
\end{eqnarray*}
\end{lemma}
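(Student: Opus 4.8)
The plan is to read the hypothesis as the statement that the log loss has a large directional derivative along the single ``correction direction'' $\phi(x) := h(x)g(f_T(x))$, and then convert a large gradient into a guaranteed loss decrease via smoothness of the log loss. First I would invoke the bridge established in Section~\ref{sec:achieving_multicalibration}: for the witnessing pair $h\in\mathcal{H}$, $g\in\mathcal{G}$, the product $\phi(x)=h(x)g(f_T(x))$ is itself a decision tree over the augmented feature space, i.e. $\phi(x)=h_{k'}(x,f_T(x))$ for some index $k'$. Hence the candidate correction $\tilde{h}=\eta\,\phi$ is of the required form $\sum_{k}\tilde{w}_k h_k(x,f_T(x))$ with $\tilde{w}=\eta\,e_{k'}$, and it remains only to choose the scalar step $\eta$.

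Next I would carry out a one–dimensional analysis of $\psi(\eta):=\mathbb{E}[\mathcal{L}(F_T(X)+\eta\,\phi(X),Y)]$. Using $\partial_F\mathcal{L}(F,y)=\mu(F)-y$, the derivative at the origin is $\psi'(0)=\mathbb{E}[\phi(X)(f_T(X)-Y)]$, so $|\psi'(0)|=\Delta_{h,g}(f_T)$ is exactly the MC-deviation appearing in the hypothesis. For the second derivative, $\psi''(\eta)=\mathbb{E}[\phi(X)^2\,\mu(F_T(X)+\eta\phi(X))(1-\mu(F_T(X)+\eta\phi(X)))]$; since $\phi\in\{0,1\}$ (so $\phi^2=\phi$) and $\mu(1-\mu)\le\tfrac14$, this is bounded \emph{uniformly in $\eta$} by $\tfrac14\mathbb{E}[\phi(X)]=\tfrac14\mathbb{E}[h(X)g(f_T(X))]$. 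Minimizing the resulting quadratic upper bound $\psi(\eta)\le\psi(0)+\eta\psi'(0)+\tfrac12\cdot\tfrac14\mathbb{E}[\phi]\,\eta^2$ over $\eta$ yields, at the optimal step $\eta^\ast$, the decrease $\psi(0)-\psi(\eta^\ast)\ge 2\,\Delta_{h,g}(f_T)^2/\mathbb{E}[h g(f_T)]$.

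Finally I would substitute the hypothesis and the definition of $C_T$. Plugging $\Delta_{h,g}(f_T)^2\ge\alpha^2\,\mathbb{E}[h(X)f_T(X)(1-f_T(X))]$ into the last display, and observing that the factor $\mathbb{E}[h]$ cancels between numerator and denominator so that $\mathbb{E}[hg(f_T)]/\mathbb{E}[hf_T(1-f_T)]\le C_T^2$, gives a loss decrease of at least $2\alpha^2/C_T^2$. Rearranging produces $\alpha\le \tfrac{C_T}{\sqrt2}\sqrt{\psi(0)-\psi(\eta^\ast)}$, which is stronger than and therefore implies the stated bound with constant $\tfrac{2}{\sqrt3}$; taking $\tilde{h}=\eta^\ast\phi$ identifies $\psi(0)-\psi(\eta^\ast)$ with $\mathbb{E}[\mathcal{L}(F_T(X),Y)]-\mathbb{E}[\mathcal{L}(F_T(X)+\tilde{h}(X),Y)]$, closing the argument.

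The step I expect to be the main obstacle is the curvature control: one must bound $\psi''$ \emph{uniformly along the entire step} and obtain a denominator that is exactly $\mathbb{E}[hg(f_T)]$, so that it matches the numerator of $C_T^2$. A tempting but flawed route is a pointwise Hessian bound through $f_T(1-f_T)$, which is not uniform in $\eta$ because the sigmoid curvature varies with the logit; I would deliberately use the uniform $\tfrac14$ bound instead. This is precisely why $C_T$ is defined with $\mathbb{E}[g\mid h=1]$ in the numerator rather than a Hessian-weighted quantity, and it is what makes the constant come out cleanly. I would also note the degenerate case $\mathbb{E}[\phi]=0$: then the hypothesis forces $\Delta_{h,g}(f_T)=0$ and there is nothing to prove, consistent with the remark that $C_T$ is finite exactly when $\mathbb{E}[f_T(1-f_T)\mid h=1]>0$ for every group.
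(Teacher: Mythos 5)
Your proposal is correct, and its top-level strategy coincides with the paper's: take the correction to be supported on the single product weak learner $\phi(x)=h(x)g(f_T(x))$ (legitimized, as you note, by the bridge $h(x)g(f(x))=h'(x,f(x))$ from Section~\ref{sec:achieving_multicalibration}; the paper makes the same identification implicitly by indexing $\tilde w$ by pairs $(h,g)$), observe that the MC-deviation is exactly the magnitude of the log-loss gradient along $\phi$, convert that gradient into a guaranteed loss decrease via smoothness, and finish by turning the ratio $\expectation{h(X)g(f_T(X))}/\expectation{h(X)f_T(X)(1-f_T(X))}$ into $C_T^2$. Where you genuinely differ is the gradient-to-decrease conversion. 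The paper fixes the explicit step $\tilde w_{h^*,g^*}=\expectation{h^*(X)g^*(f_T(X))(Y-f_T(X))}/\expectation{(h^*(X)g^*(f_T(X)))^2}$ and bounds the decrease through the chain ``convexity plus $L$-Lipschitz gradient,'' which yields $(1-L)\,\Delta_{h,g}(f_T)^2/\expectation{\phi(X)}$ and, with $L=1/4$, the factor $3/4$ and the constant $2/\sqrt{3}$. You instead run a one-dimensional descent-lemma argument: a curvature bound $\psi''(\eta)\le \tfrac14\expectation{\phi(X)}$ that is uniform in $\eta$ (because $\phi^2=\phi$ and $\mu(1-\mu)\le 1/4$), followed by an optimized step $\eta^*$, giving the sharper decrease $2\,\Delta_{h,g}(f_T)^2/\expectation{\phi(X)}$ and hence the stronger conclusion $\alpha\le (C_T/\sqrt{2})\sqrt{\cdot}$, which implies the stated $2/\sqrt{3}$ bound. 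Your route buys a better constant and does not need $1-L>0$ (the paper's chain would degenerate for $L\ge 1$), at no real cost since $\eta^*=-4\psi'(0)/\expectation{\phi(X)}$ is still explicit. Two minor points of alignment: the numerator $\expectation{g(X)\mid h(X)=1}$ in the paper's definition of $C_T$ must be read as $\expectation{g(f_T(X))\mid h(X)=1}$, exactly as you do; and your explicit treatment of the degenerate case $\expectation{\phi(X)}=0$ is a detail the paper's proof glosses over, so flagging it is a small improvement rather than a deviation.
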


\begin{proof}
For any $F$, $h$ and any convex, $L$-smooth loss function $\mathcal{L}$,
\begin{eqnarray*}
\mathcal{L}(F(x),y)-\mathcal{L}(F(x)+h(x),y) \geq \frac{d}{dF} \mathcal{L}(F(x)+h(x),y)(-h(x))\\
\geq \frac{d}{dF}\mathcal{L}(F(x),y)(-h(x)) - L h(x)^2
\end{eqnarray*}
where the first inequality holds as $\mathcal{L}$ is convex and the second inequality holds because $\mathcal{L}$ is $L$-smooth.


For the log loss $(d/dF)\mathcal{L}(F,y) = -(y-\mu(F))$, hence, we have
\begin{equation*}
\resizebox{0.485\textwidth}{!}{
$\expectation{\mathcal{L}(F(X),Y)\!-\!\mathcal{L}(F(X)\!+\!h(X),Y)}
\!\geq\! \expectation{(Y\!-\!f(X))h(X)} \!-\! L \expectation{h(X)^2}$
}
\end{equation*}

Let
$$
(h^*,g^*) \in \arg\max_{h\in \mathcal{G}, g\in \mathcal{G}} \frac{|\expectation{h(X)g(f_T(X))(Y-f_T(X))}|}{\sqrt{\expectation{h(X)f_T(X)(1-f_T(X))}}}
$$
and, let $\tilde{w}$ be defined as:
$$
\tilde{w}_{h,g} = \frac{\mathbf{E}[h(X) g(f_T(X))(Y-f_T(X))]}{\expectation{(h(X) g(f_T(X)))^2}}
$$ 
if $(h,g) = (h^*, g^*)$, otherwise $\tilde{w}_{h,g} = 0$.
For $w = \tilde{w}$, we have both
\begin{eqnarray*}
\expectation{\!(Y\!-\!f_T(X))\!\!\sum_{k=1}^K \!w_k h_k(x,f_T(X))\!}
\!=\! \frac{\expectation{\!h^*(X) g^*\!(f_T(X))(Y\!-\!f_T(X)) }^2}{\expectation{(h^*(X)g^*(f_T(X)))^2}}\\
\expectation{\left(\sum_{k=1}^K w_k h_k(X,f_T(X))\right)^2}
= \frac{\expectation{h^*(X) g^*(f_T(X))(Y-f_T(X)) }^2}{\expectation{(h^*(X)g^*(f_T(X)))^2}}.
\end{eqnarray*}
Hence, we have
\begin{eqnarray*}
&&\expectation{\mathcal{L}(F_T(X),Y) \!-\! \mathcal{L}\left(F_T(X)\!+\!\!\sum_{k=1}^K \tilde{w}_k h_k(X,f_T(X)),\!Y\right)} \\
&\geq & \left(1-L\right)\frac{\expectation{h^*(X) g^*(f_T(X))(Y-f_T(X)) }^2}{\expectation{(h^*(X)g^*(f_T(X)))^2}}\\
&\geq & \left(1-L\right)\frac{\expectation{f_T(X)(1-f_T(X))\mid h^*(X)=1}}{\expectation{g^*(f_T(X))\mid h^*(X)=1}}\alpha^2.
\end{eqnarray*}

Because $L = 1/4$ for the log loss, we conclude the proof
\begin{equation*}
\alpha^2 \leq \frac{4}{3}C_T^2 \left(\expectation{\mathcal{L}(F_T(X),Y)}-\expectation{\mathcal{L}\left(F_T(X)+\tilde{h}(X),Y\right)}\right)
\end{equation*}
\end{proof}

Now, we can derive the upper bound of the MC error:

\begin{proposition}\label{prop:mc} $f_T$ is $\alpha$-MC with
$
\alpha \leq \frac{2}{\sqrt{3}} C_T\sqrt{\phi_T + \epsilon_T}
$
where 
\begin{eqnarray*}
    \phi_T &\coloneqq& 2\ \expectation{|F_{T+1}(X)-F_T(X)|} + \frac{1}{8}\expectation{(F_{T+1}(X)-F_T(X))^2}\\
    \epsilon_T &\coloneqq& \expectation{\mathcal{L}(F_{T+1}(X),Y)}- \mathcal{L}_T(w^*(T)),
\end{eqnarray*}
and $C_T$ is defined as in Lemma~\ref{lem:alpha}.
\end{proposition}

\begin{proof} For any $w\in \mathbb{R}^K$,
\begin{eqnarray*}
\expectation{\mathcal{L}(F_t(X), Y)}-\expectation{\mathcal{L}\left(F_t(X)+ \sum_{k=1}^K w_k h_k(X,f_t(X)), \!Y\right)}\qquad \\
\leq  \underbrace{\!\expectation{ \!\mathcal{L}(F_t(X), \!Y) \!-\!\mathcal{L}(F_{t+1}(X),\!Y)}}_{(*)} \!+\! \underbrace{\expectation{ \!\mathcal{L}(F_{t+1}(X),  \!Y)} \!-\! \mathcal{L}_t(w^*(t))}_{(**)}
\end{eqnarray*}

\noindent The first term ($*$) captures the gap of the loss function values at two successive rounds of the algorithm and can be bounded by $2 \expectation{|F_{t+1}(X)-F_t(X|} + \frac{L}{2}\expectation{(F_{t+1}(X)-F_t(X))^2}$.
The second term ($**$) measures the gap between the loss at the end of round $t$ and the loss from an optimal linear combination of weak classifiers. This gap can be reduced by increasing the number of GBM iterations and is bounded as in Eq~(\ref{equ:rate}). The lemma follows from Lemma~\ref{lem:alpha}.
\end{proof}

%
%
%
The bound on the multicalibration error depends on three terms: (a) $C_T$, which depends inversely on the average prediction variance within subgroups, (b) $\phi_T$, which depends on the gap between the prediction values $F_{T+1}$ and $F_T$, and (c) $\epsilon_T$, which measures the gap between 
the loss achieved by the predictor $F_{T+1}$ and the optimum loss in round $T$.
If $C_T$ is bounded by a constant, the MC error can be made arbitrarily small by reducing $\phi_T$ and $\epsilon_T$. As $T$ increases, $\phi_T$ approaches zero, and $\epsilon_T$ can be minimized by increasing the number of GBM iterations $M_T$, as shown in Eq.~(\ref{equ:rate}). Also, note that
$$
C_T^2 \leq \frac{1}{\min_{h\in \mathcal{H}}\expectation{f_T(X)(1-f_T(X))\mid h(X) = 1}}.
$$
This bound is finite if and only if $\expectation{f_T(X)(1-f_T(X))\mid h(X) = 1} > 0$ for every group $h\in \mathcal{H}$. Equivalently, for every group $h$, there exists at least one member $x$ such that $0 < f_T(x) < 1$. This condition clearly holds if $0 < f_T(x) < 1$ for all $x\in \mathcal{X}$ which is satisfied  whenever $F_T(x) \in [-R, R]$ for every $x\in \mathcal{X}$, for some constant $ R > 0$. Moreover, if the predictor $f_T$ takes values in $[\epsilon,1-\epsilon]$, for some $0 < \epsilon \leq 1/2$, then 
$
C_T^2 \leq \frac{2}{\epsilon}.
$



\section{Extended Experimental Results}
\label{app:extended_results}

\onecolumn

\begin{center}
\renewcommand{\arraystretch}{0.65}
\scriptsize
\begin{tabular}{lllllll}
\toprule
dataset & mcb\_algorithm & $\text{MCE}*\sigma$ (unspecified) & MCE (unspecified) & $\text{MCE}*\sigma$ (prespecified) & MCE (prespecified) & max smECE (prespecified) \\
\midrule
MEPS & base\_model & 0.0265 & 3.9940 & 0.0926 & 3.2849 & 0.0819 \\
 & MCGrad\_msh\_20 & 0.0253 & 3.8815 & 0.0815 & 3.2923 & 0.0755 \\
 & MCGrad & 0.0265 & 3.9940 & 0.0926 & 3.2849 & 0.0819 \\
 & MCGrad\_group\_features & 0.0265 & 3.9940 & 0.0926 & 3.2849 & 0.0819 \\
 & MCGrad\_no\_unshrink & 0.0265 & 3.9940 & 0.0926 & 3.2849 & 0.0819 \\
 & MCGrad\_one\_round & 0.0256 & 3.9250 & 0.0806 & 3.2888 & 0.0733 \\
 & DFMC & 0.0266 & 4.0473 & 0.0870 & 3.4147 & 0.0761 \\
 & HKRR & 0.0318 & 4.8624 & 0.0872 & \textbf{3.1814} & 0.1054 \\
 & Isotonic & \textbf{0.0237} & \textbf{3.6076} & \textbf{0.0797} & 3.4070 & \textbf{0.0713} \\
\midrule
acs\_employment\_all\_states & base\_model & 0.0274 & 87.2951 & 0.0263 & 45.5082 & 0.0499 \\
 & MCGrad\_msh\_20 & \textbf{0.0019} & \textbf{7.1680} & 0.0099 & 4.7747 & 0.0137 \\
 & MCGrad & 0.0085 & 30.7752 & 0.0126 & 14.2049 & 0.0199 \\
 & MCGrad\_group\_features & 0.0143 & 47.9972 & \textbf{0.0061} & \textbf{3.3573} & 0.0120 \\
 & MCGrad\_no\_unshrink & 0.0085 & 31.1755 & 0.0128 & 14.3882 & 0.0201 \\
 & MCGrad\_one\_round & 0.0085 & 30.7752 & 0.0126 & 14.2049 & 0.0199 \\
 & DFMC & 0.0156 & 52.0332 & 0.0110 & 12.6897 & 0.0152 \\
 & HKRR & 0.0196 & 63.8249 & 0.0104 & 14.3973 & \textbf{0.0110} \\
 & Isotonic & 0.0153 & 50.7122 & 0.0210 & 30.8645 & 0.0215 \\
\midrule
HMDA & base\_model & 0.0283 & 13.9421 & 0.0475 & 5.5320 & 0.0468 \\
 & MCGrad\_msh\_20 & 0.0091 & 4.7703 & \textbf{0.0300} & 3.2015 & 0.0365 \\
 & MCGrad & 0.0093 & 4.8958 & 0.0309 & 3.2932 & 0.0371 \\
 & MCGrad\_group\_features & 0.0283 & 13.9249 & 0.0372 & 2.8149 & \textbf{0.0363} \\
 & MCGrad\_no\_unshrink & \textbf{0.0080} & \textbf{4.1592} & 0.0325 & 3.0728 & 0.0405 \\
 & MCGrad\_one\_round & 0.0096 & 4.8650 & 0.0443 & 3.0720 & 0.0504 \\
 & DFMC & 0.0287 & 14.0359 & 0.0372 & \textbf{2.8040} & 0.0380 \\
 & HKRR & 0.0285 & 13.8664 & 0.0401 & 3.3343 & 0.0380 \\
 & Isotonic & 0.0301 & 14.7130 & 0.0468 & 4.6516 & 0.0456 \\
\midrule
acs\_travel\_time\_all\_states & base\_model & 0.0192 & 21.5562 & 0.1736 & 18.6954 & 0.1688 \\
 & MCGrad\_msh\_20 & \textbf{0.0059} & \textbf{6.8154} & 0.0330 & 4.9704 & 0.0346 \\
 & MCGrad & \textbf{0.0059} & 6.8237 & 0.0331 & 4.9730 & 0.0346 \\
 & MCGrad\_group\_features & 0.0165 & 18.6758 & \textbf{0.0297} & \textbf{2.7626} & \textbf{0.0293} \\
 & MCGrad\_no\_unshrink & 0.0061 & 6.9743 & 0.0349 & 4.1746 & 0.0365 \\
 & MCGrad\_one\_round & 0.0118 & 13.3776 & 0.0865 & 12.9763 & 0.0840 \\
 & DFMC & 0.0166 & 18.7304 & 0.0331 & 4.2345 & 0.0323 \\
 & HKRR & 0.0164 & 18.4043 & 0.0355 & 12.7941 & 0.0349 \\
 & Isotonic & 0.0167 & 18.7715 & 0.2116 & 18.1106 & 0.2025 \\
\midrule
acs\_income\_all\_states & base\_model & 0.0233 & 34.0505 & 0.0714 & 17.9622 & 0.0708 \\
 & MCGrad\_msh\_20 & \textbf{0.0049} & 7.9316 & 0.0318 & 6.2694 & 0.0314 \\
 & MCGrad & 0.0050 & 8.1133 & 0.0322 & 6.5280 & 0.0317 \\
 & MCGrad\_group\_features & 0.0201 & 29.6245 & \textbf{0.0106} & 5.3609 & \textbf{0.0152} \\
 & MCGrad\_no\_unshrink & \textbf{0.0049} & \textbf{7.8965} & 0.0317 & \textbf{4.5370} & 0.0305 \\
 & MCGrad\_one\_round & 0.0179 & 27.7918 & 0.0520 & 13.9878 & 0.0502 \\
 & DFMC & 0.0211 & 31.0460 & 0.0185 & 9.9542 & 0.0204 \\
 & HKRR & 0.0230 & 33.8068 & 0.0408 & 10.2374 & 0.0413 \\
 & Isotonic & 0.0216 & 31.7564 & 0.0701 & 17.3586 & 0.0694 \\
\midrule
ACSIncome & base\_model & 0.0265 & 13.4846 & 0.0611 & 6.1578 & 0.0579 \\
 & MCGrad\_msh\_20 & 0.0104 & 6.0717 & 0.0443 & 3.8099 & 0.0498 \\
 & MCGrad & \textbf{0.0102} & \textbf{5.9816} & 0.0439 & 3.8675 & 0.0501 \\
 & MCGrad\_group\_features & 0.0239 & 12.3216 & \textbf{0.0320} & \textbf{3.4824} & \textbf{0.0337} \\
 & MCGrad\_no\_unshrink & 0.0109 & 6.2974 & 0.0446 & 3.7393 & 0.0530 \\
 & MCGrad\_one\_round & 0.0187 & 10.3174 & 0.0489 & 5.6888 & 0.0520 \\
 & DFMC & 0.0243 & 12.5025 & 0.0345 & 3.7487 & 0.0351 \\
 & HKRR & 0.0246 & 12.5983 & 0.0338 & 3.6878 & 0.0354 \\
 & Isotonic & 0.0245 & 12.6166 & 0.0592 & 5.1895 & 0.0574 \\
\midrule
acs\_health\_insurance\_all\_states & base\_model & 0.0198 & 45.2690 & 0.0803 & 38.2892 & 0.0800 \\
 & MCGrad\_msh\_20 & \textbf{0.0044} & \textbf{10.4370} & 0.0082 & 4.0131 & 0.0108 \\
 & MCGrad & \textbf{0.0044} & \textbf{10.4370} & 0.0082 & 4.0131 & 0.0108 \\
 & MCGrad\_group\_features & 0.0120 & 27.8627 & \textbf{0.0038} & \textbf{3.2136} & \textbf{0.0075} \\
 & MCGrad\_no\_unshrink & 0.0045 & 10.5774 & 0.0093 & 3.8013 & 0.0104 \\
 & MCGrad\_one\_round & 0.0096 & 22.4316 & 0.0522 & 21.3152 & 0.0520 \\
 & DFMC & 0.0118 & 27.4016 & 0.0148 & 7.5026 & 0.0232 \\
 & HKRR & 0.0150 & 33.9543 & 0.0322 & 17.3882 & 0.0319 \\
 & Isotonic & 0.0134 & 30.9506 & 0.0780 & 31.8857 & 0.0778 \\
\midrule
acs\_public\_health\_insurance\_all\_states & base\_model & 0.0354 & 40.4553 & 0.1035 & 22.1733 & 0.1025 \\
 & MCGrad\_msh\_20 & 0.0068 & 8.3577 & 0.0324 & 4.9253 & 0.0400 \\
 & MCGrad & \textbf{0.0067} & \textbf{8.2452} & 0.0320 & 4.9934 & 0.0396 \\
 & MCGrad\_group\_features & 0.0282 & 32.7613 & 0.0305 & \textbf{3.4484} & \textbf{0.0324} \\
 & MCGrad\_no\_unshrink & 0.0076 & 9.2211 & \textbf{0.0289} & 5.3953 & 0.0380 \\
 & MCGrad\_one\_round & 0.0265 & 31.3918 & 0.0668 & 12.7760 & 0.0654 \\
 & DFMC & 0.0277 & 32.0694 & 0.0411 & 6.4715 & 0.0434 \\
 & HKRR & 0.0342 & 39.5956 & 0.0428 & 7.3886 & 0.0403 \\
 & Isotonic & 0.0329 & 38.0286 & 0.1064 & 21.8710 & 0.1051 \\
\midrule
acs\_mobility\_all\_states & base\_model & 0.0642 & 52.3930 & 0.0689 & 9.0919 & 0.0652 \\
 & MCGrad\_msh\_20 & 0.0068 & 5.8904 & 0.0306 & 3.1756 & 0.0349 \\
 & MCGrad & 0.0068 & 5.8824 & 0.0302 & 3.2353 & 0.0347 \\
 & MCGrad\_group\_features & 0.0673 & 55.1968 & \textbf{0.0156} & \textbf{2.4286} & \textbf{0.0200} \\
 & MCGrad\_no\_unshrink & \textbf{0.0060} & \textbf{5.2134} & 0.0338 & 2.9911 & 0.0370 \\
 & MCGrad\_one\_round & 0.0150 & 12.7685 & 0.0423 & 6.9712 & 0.0414 \\
 & DFMC & 0.0671 & 54.9004 & 0.0217 & 2.9196 & 0.0261 \\
 & HKRR & 0.0670 & 54.5418 & 0.0406 & 5.2062 & 0.0433 \\
 & Isotonic & 0.0669 & 54.7596 & 0.0659 & 5.2414 & 0.0677 \\
\midrule
CreditDefault & base\_model & 0.0168 & 3.5652 & 0.0783 & 2.1844 & 0.0536 \\
 & MCGrad\_msh\_20 & \textbf{0.0162} & \textbf{3.4353} & 0.0842 & 2.2523 & 0.0543 \\
 & MCGrad & \textbf{0.0162} & 3.4467 & 0.0790 & \textbf{2.1717} & 0.0536 \\
 & MCGrad\_group\_features & 0.0168 & 3.5723 & 0.0772 & 2.3468 & \textbf{0.0526} \\
 & MCGrad\_no\_unshrink & 0.0164 & 3.4813 & 0.0796 & 2.2148 & 0.0536 \\
 & MCGrad\_one\_round & \textbf{0.0162} & 3.4467 & 0.0790 & \textbf{2.1717} & 0.0536 \\
 & DFMC & 0.0164 & 3.4734 & 0.0943 & 2.2036 & 0.0596 \\
 & HKRR & 0.0180 & 3.7981 & \textbf{0.0559} & 2.5684 & 0.0538 \\
 & Isotonic & 0.0168 & 3.5659 & 0.0847 & 2.3044 & 0.0544 \\
\midrule
BankMarketing & base\_model & 0.0195 & 7.0101 & 0.0522 & 5.1177 & 0.0565 \\
 & MCGrad\_msh\_20 & 0.0156 & 5.9140 & 0.0239 & 1.8957 & 0.0328 \\
 & MCGrad & 0.0110 & 4.1535 & 0.0257 & \textbf{1.8693} & 0.0321 \\
 & MCGrad\_group\_features & 0.0163 & 5.8165 & 0.0256 & 2.1236 & 0.0362 \\
 & MCGrad\_no\_unshrink & \textbf{0.0103} & \textbf{3.8441} & \textbf{0.0222} & 2.1485 & \textbf{0.0317} \\
 & MCGrad\_one\_round & 0.0139 & 5.0625 & 0.0384 & 3.0142 & 0.0415 \\
 & DFMC & 0.0158 & 5.6785 & 0.0267 & 1.9211 & 0.0350 \\
 & HKRR & 0.0270 & 9.1271 & 0.0360 & 3.9728 & 0.0390 \\
 & Isotonic & 0.0177 & 6.3438 & 0.0288 & 2.4565 & 0.0370 \\
\bottomrule
\end{tabular}
    \captionof{table}{Multicalibration metrics.}
\end{center}

\begin{center}
\renewcommand{\arraystretch}{0.65}
\scriptsize
\begin{tabular}{lllllll}
\toprule
dataset & mcb\_algorithm & ECCE & $\text{ECCE}/\sigma$ & ECE & Log loss & PRAUC \\
\midrule
MEPS & base\_model & 0.0157 & 2.3774 & 0.0215 & 0.3177 & 0.6211 \\
 & MCGrad\_msh\_20 & 0.0146 & 2.2479 & 0.0186 & \textbf{0.3150} & \textbf{0.6221} \\
 & MCGrad & 0.0157 & 2.3774 & 0.0215 & 0.3177 & 0.6211 \\
 & MCGrad\_group\_features & 0.0157 & 2.3774 & 0.0215 & 0.3177 & 0.6211 \\
 & MCGrad\_no\_unshrink & 0.0157 & 2.3774 & 0.0215 & 0.3177 & 0.6211 \\
 & MCGrad\_one\_round & 0.0140 & 2.1413 & 0.0189 & 0.3153 & 0.6204 \\
 & DFMC & 0.0143 & 2.1821 & 0.0160 & 0.3163 & 0.6116 \\
 & HKRR & 0.0166 & 2.5308 & 0.0252 & 0.3367 & 0.5317 \\
 & Isotonic & \textbf{0.0112} & \textbf{1.7140} & \textbf{0.0118} & 0.3160 & 0.5945 \\
\midrule
acs\_employment\_all\_states & base\_model & 0.0162 & 51.6562 & 0.0328 & 0.2257 & 0.9273 \\
 & MCGrad\_msh\_20 & 0.0010 & 3.8704 & 0.0013 & \textbf{0.1511} & \textbf{0.9597} \\
 & MCGrad & 0.0012 & 4.4955 & 0.0023 & 0.1599 & 0.9512 \\
 & MCGrad\_group\_features & \textbf{0.0008} & \textbf{2.5486} & \textbf{0.0009} & 0.2001 & 0.9391 \\
 & MCGrad\_no\_unshrink & 0.0013 & 4.7459 & 0.0027 & 0.1588 & 0.9519 \\
 & MCGrad\_one\_round & 0.0038 & 13.7094 & 0.0067 & 0.1698 & 0.9451 \\
 & DFMC & 0.0011 & 3.7251 & 0.0025 & 0.2181 & 0.9349 \\
 & HKRR & 0.0031 & 10.1073 & 0.0062 & 0.2111 & 0.9238 \\
 & Isotonic & 0.0009 & 2.9845 & 0.0011 & 0.2066 & 0.9294 \\
\midrule
HMDA & base\_model & 0.0036 & 1.7643 & 0.0063 & 0.2761 & 0.9772 \\
 & MCGrad\_msh\_20 & 0.0044 & 2.3081 & 0.0057 & \textbf{0.2539} & \textbf{0.9818} \\
 & MCGrad & 0.0048 & 2.5294 & 0.0060 & 0.2541 & \textbf{0.9818} \\
 & MCGrad\_group\_features & 0.0029 & 1.4001 & 0.0030 & 0.2746 & 0.9775 \\
 & MCGrad\_no\_unshrink & 0.0034 & 1.7673 & 0.0039 & 0.2546 & 0.9817 \\
 & MCGrad\_one\_round & 0.0033 & 1.6494 & 0.0055 & 0.2609 & 0.9805 \\
 & DFMC & 0.0032 & 1.5643 & 0.0055 & 0.2747 & 0.9775 \\
 & HKRR & 0.0047 & 2.3059 & 0.0069 & 0.2773 & 0.9723 \\
 & Isotonic & \textbf{0.0023} & \textbf{1.1215} & \textbf{0.0024} & 0.2757 & 0.9760 \\
\midrule
acs\_travel\_time\_all\_states & base\_model & 0.0093 & 10.4042 & 0.0181 & 0.6520 & 0.5258 \\
 & MCGrad\_msh\_20 & 0.0028 & 3.2589 & 0.0053 & 0.6304 & 0.5775 \\
 & MCGrad & 0.0028 & 3.2067 & 0.0053 & 0.6303 & 0.5776 \\
 & MCGrad\_group\_features & \textbf{0.0013} & \textbf{1.4963} & 0.0015 & 0.6449 & 0.5337 \\
 & MCGrad\_no\_unshrink & 0.0021 & 2.4608 & 0.0037 & \textbf{0.6294} & \textbf{0.5781} \\
 & MCGrad\_one\_round & 0.0053 & 5.9904 & 0.0068 & 0.6381 & 0.5579 \\
 & DFMC & 0.0025 & 2.8485 & 0.0047 & 0.6460 & 0.5318 \\
 & HKRR & 0.0051 & 5.7510 & 0.0052 & 0.6480 & 0.5083 \\
 & Isotonic & 0.0032 & 3.5899 & \textbf{0.0007} & 0.6487 & 0.5233 \\
\midrule
acs\_income\_all\_states & base\_model & 0.0067 & 9.7652 & 0.0135 & 0.4706 & 0.7407 \\
 & MCGrad\_msh\_20 & 0.0019 & 3.0261 & 0.0029 & \textbf{0.3924} & \textbf{0.8299} \\
 & MCGrad & 0.0020 & 3.1717 & 0.0029 & \textbf{0.3924} & 0.8298 \\
 & MCGrad\_group\_features & 0.0013 & \textbf{1.9035} & \textbf{0.0014} & 0.4608 & 0.7503 \\
 & MCGrad\_no\_unshrink & \textbf{0.0012} & 1.9568 & 0.0018 & 0.3926 & 0.8296 \\
 & MCGrad\_one\_round & 0.0034 & 5.2775 & 0.0049 & 0.4230 & 0.7963 \\
 & DFMC & 0.0019 & 2.7820 & 0.0029 & 0.4648 & 0.7469 \\
 & HKRR & 0.0032 & 4.7310 & 0.0038 & 0.4681 & 0.7230 \\
 & Isotonic & 0.0015 & 2.1644 & 0.0018 & 0.4674 & 0.7402 \\
\midrule
ACSIncome & base\_model & 0.0076 & 3.8599 & 0.0129 & 0.4635 & 0.7898 \\
 & MCGrad\_msh\_20 & 0.0064 & 3.7358 & 0.0120 & 0.3863 & 0.8599 \\
 & MCGrad & 0.0065 & 3.8167 & 0.0114 & 0.3862 & 0.8600 \\
 & MCGrad\_group\_features & \textbf{0.0019} & 1.0030 & 0.0046 & 0.4556 & 0.7963 \\
 & MCGrad\_no\_unshrink & 0.0041 & 2.3710 & 0.0086 & \textbf{0.3853} & \textbf{0.8601} \\
 & MCGrad\_one\_round & 0.0035 & 1.9324 & 0.0080 & 0.4076 & 0.8398 \\
 & DFMC & \textbf{0.0019} & \textbf{0.9585} & \textbf{0.0045} & 0.4568 & 0.7955 \\
 & HKRR & 0.0023 & 1.1808 & 0.0070 & 0.4604 & 0.7721 \\
 & Isotonic & 0.0024 & 1.2285 & 0.0052 & 0.4605 & 0.7872 \\
\midrule
acs\_health\_insurance\_all\_states & base\_model & 0.0091 & 20.7777 & 0.0172 & 0.3971 & 0.2730 \\
 & MCGrad\_msh\_20 & 0.0011 & 2.5741 & 0.0019 & \textbf{0.3762} & 0.3331 \\
 & MCGrad & 0.0011 & 2.5656 & 0.0018 & \textbf{0.3762} & \textbf{0.3334} \\
 & MCGrad\_group\_features & \textbf{0.0005} & \textbf{1.1681} & 0.0008 & 0.3898 & 0.2933 \\
 & MCGrad\_no\_unshrink & 0.0007 & 1.6650 & 0.0015 & \textbf{0.3762} & 0.3330 \\
 & MCGrad\_one\_round & 0.0054 & 12.5842 & 0.0067 & 0.3844 & 0.3072 \\
 & DFMC & 0.0018 & 4.1372 & 0.0055 & 0.3909 & 0.2912 \\
 & HKRR & 0.0073 & 16.6308 & 0.0069 & 0.3944 & 0.2567 \\
 & Isotonic & 0.0009 & 2.0126 & \textbf{0.0007} & 0.3951 & 0.2702 \\
\midrule
acs\_public\_health\_insurance\_all\_states & base\_model & 0.0160 & 18.1469 & 0.0270 & 0.5194 & 0.5979 \\
 & MCGrad\_msh\_20 & \textbf{0.0013} & \textbf{1.6389} & 0.0022 & 0.4682 & 0.6684 \\
 & MCGrad & 0.0014 & 1.7647 & 0.0021 & 0.4681 & 0.6686 \\
 & MCGrad\_group\_features & 0.0018 & 2.0503 & 0.0024 & 0.5082 & 0.6091 \\
 & MCGrad\_no\_unshrink & 0.0020 & 2.3991 & 0.0023 & \textbf{0.4672} & \textbf{0.6703} \\
 & MCGrad\_one\_round & 0.0097 & 11.4243 & 0.0105 & 0.4934 & 0.6328 \\
 & DFMC & 0.0024 & 2.7898 & 0.0029 & 0.5101 & 0.6056 \\
 & HKRR & 0.0054 & 6.2067 & 0.0105 & 0.5147 & 0.5595 \\
 & Isotonic & 0.0019 & 2.2277 & \textbf{0.0012} & 0.5144 & 0.5921 \\
\midrule
acs\_mobility\_all\_states & base\_model & 0.0076 & 6.2231 & 0.0173 & 0.5540 & 0.8059 \\
 & MCGrad\_msh\_20 & 0.0054 & 4.6724 & 0.0080 & \textbf{0.5092} & 0.8694 \\
 & MCGrad & 0.0053 & 4.6375 & 0.0079 & \textbf{0.5092} & \textbf{0.8695} \\
 & MCGrad\_group\_features & 0.0021 & 1.6867 & 0.0036 & 0.5507 & 0.8112 \\
 & MCGrad\_no\_unshrink & 0.0032 & 2.7402 & 0.0050 & \textbf{0.5092} & 0.8691 \\
 & MCGrad\_one\_round & 0.0097 & 8.2706 & 0.0119 & 0.5176 & 0.8612 \\
 & DFMC & 0.0029 & 2.4097 & 0.0035 & 0.5514 & 0.8095 \\
 & HKRR & 0.0048 & 3.9479 & 0.0061 & 0.5532 & 0.7936 \\
 & Isotonic & \textbf{0.0019} & \textbf{1.5359} & \textbf{0.0016} & 0.5521 & 0.8045 \\
\midrule
CreditDefault & base\_model & 0.0077 & 1.6218 & 0.0092 & 0.4333 & 0.5319 \\
 & MCGrad\_msh\_20 & 0.0056 & 1.1842 & 0.0103 & \textbf{0.4325} & \textbf{0.5320} \\
 & MCGrad & 0.0056 & 1.1911 & 0.0110 & 0.4328 & 0.5318 \\
 & MCGrad\_group\_features & 0.0058 & 1.2361 & 0.0118 & 0.4337 & 0.5291 \\
 & MCGrad\_no\_unshrink & \textbf{0.0051} & \textbf{1.0907} & 0.0100 & 0.4327 & 0.5318 \\
 & MCGrad\_one\_round & 0.0056 & 1.1911 & 0.0110 & 0.4328 & 0.5318 \\
 & DFMC & 0.0059 & 1.2568 & 0.0129 & 0.4343 & 0.5275 \\
 & HKRR & 0.0073 & 1.5338 & 0.0134 & 0.4402 & 0.4808 \\
 & Isotonic & 0.0065 & 1.3702 & \textbf{0.0091} & 0.4445 & 0.5251 \\
\midrule
BankMarketing & base\_model & 0.0167 & 5.9821 & 0.0320 & 0.2433 & 0.5587 \\
 & MCGrad\_msh\_20 & 0.0034 & 1.2817 & 0.0062 & 0.2141 & 0.6190 \\
 & MCGrad & 0.0031 & 1.1650 & 0.0051 & \textbf{0.2133} & \textbf{0.6209} \\
 & MCGrad\_group\_features & 0.0035 & 1.2616 & \textbf{0.0039} & 0.2307 & 0.5637 \\
 & MCGrad\_no\_unshrink & 0.0037 & 1.3911 & 0.0053 & 0.2137 & 0.6167 \\
 & MCGrad\_one\_round & 0.0073 & 2.6721 & 0.0126 & 0.2255 & 0.5806 \\
 & DFMC & 0.0035 & 1.2494 & 0.0059 & 0.2303 & 0.5634 \\
 & HKRR & 0.0102 & 3.4546 & 0.0174 & 0.2453 & 0.4933 \\
 & Isotonic & \textbf{0.0030} & \textbf{1.0791} & 0.0060 & 0.2312 & 0.5491 \\
\bottomrule
\end{tabular}
\captionof{table}{(Global) Calibration and model performance metrics.}
\end{center}

\end{document}